\newtheorem{theorem}{Theorem}
\begin{document}

\title{STAGE: Segmentation-oriented Industrial Anomaly Synthesis via Graded Diffusion with Explicit Mask Alignment}

\author{Xichen Xu$^{1}$\orcidlink{0009-0006-1442-7189}, 
        Yanshu Wang$^{1}$\orcidlink{0009-0003-0202-2289},~\IEEEmembership{Student Member,~IEEE}, 
        Jinbao Wang\orcidlink{0000-0001-5916-8965},~\IEEEmembership{Member,~IEEE}, 
        Qunyi Zhang\orcidlink{0009-0004-7452-1500},  
        Xiaoning Lei\orcidlink{0009-0002-0518-7903},     
        Guoyang Xie$^{*}$\orcidlink{0000-0001-8433-8153},  
        Guannan Jiang$^{*}$\orcidlink{0000-0003-4355-5711},     
        Zhichao Lu\orcidlink{0000-0002-4618-3573},~\IEEEmembership{Member,~IEEE}, 
\thanks{ Xichen Xu, Yanshu Wang and Qunyi Zhang are with the Global Institute of Future Technology, Shanghai Jiao Tong University, Shanghai, China (e-mail:neptune\_2333@sjtu.edu.cn, isaac\_wang@sjtu.edu.cn, zqyeleven@sjtu.edu.cn)

Jinbao Wang is with the School of Artificial Intelligence, Shenzhen University, Shenzhen, China (e-mail: wangjb@szu.edu.cn).

Xiaoning Lei and Guannan Jiang are with the Department of Intelligent Manufacturing, CATL, Ningde, China (e-mail: leixn01@catl.com, jianggn@catl.com).

Guoyang Xie is with the Department of Intelligent Manufacturing, CATL,Ningde, China, and also with the Department of Computer Science, City University of Hong Kong, Hong Kong, China (e-mail: guoyang.xie@ieee.org).

Zhichao Lu is with the Department of Computer Science, City University of Hong Kong, Hong Kong, China (e-mail: luzhichaocn@gmail.com).

Code is available at \url{https://github.com/Chhro123/STAGE}.

$^{1}$ and $^{*}$ indicate contributed equally and co-corresponding authors, respectively. 
}
}
%

% The paper headers (switched to TIP)

% \IEEEpubid{0000--0000/00\$00.00~\copyright~2021 IEEE}
% Remember, if you use this you must call \IEEEpubidadjcol in the second
% column for its text to clear the IEEEpubid mark.

\maketitle

\begin{abstract}
{S}egmentation-oriented {I}ndustrial {A}nomaly {S}ynthesis (SIAS) plays a pivotal role in enhancing the performance of downstream anomaly segmentation, as it offers an effective means of expanding abnormal data. However, existing SIAS methods face several critical limitations: (\romannumeral1) \textbf{the synthesized anomalies often lack intricate texture details and fail to align precisely with the surrounding background}, and (\romannumeral2) \textbf{they struggle to generate fine-grained, pixel-level anomalies}. To address these challenges, we propose {S}egmen{T}ation-oriented {A}nomaly synthesis via {G}raded diffusion with {E}xplicit mask alignment, termed \textbf{STAGE}. STAGE introduces a novel \emph{anomaly inference} strategy that incorporates clean background information as a prior to guide the denoising distribution, enabling the model to distinguish more effectively and highlight abnormal foregrounds. Furthermore, it employs a \emph{graded diffusion} framework that employs an anomaly-only branch to explicitly record local anomalies during both the forward and reverse processes, ensuring that tiny anomalies are not overlooked. Finally, STAGE incorporates the \emph{explicit mask alignment} (EMA) strategy to progressively align the synthesized anomalies with the background, resulting in context-consistent and structurally coherent generations. Extensive experiments on the MVTec and BTAD datasets demonstrate that STAGE achieves state-of-the-art performance in SIAS, which in turn improves downstream anomaly segmentation performance.
\end{abstract}

\begin{IEEEkeywords}
Anomaly Synthesis, Anomaly Segmentation, Diffusion Model.
\end{IEEEkeywords}

\begin{figure}[ht]
  \centering
  \includegraphics[width=\linewidth]{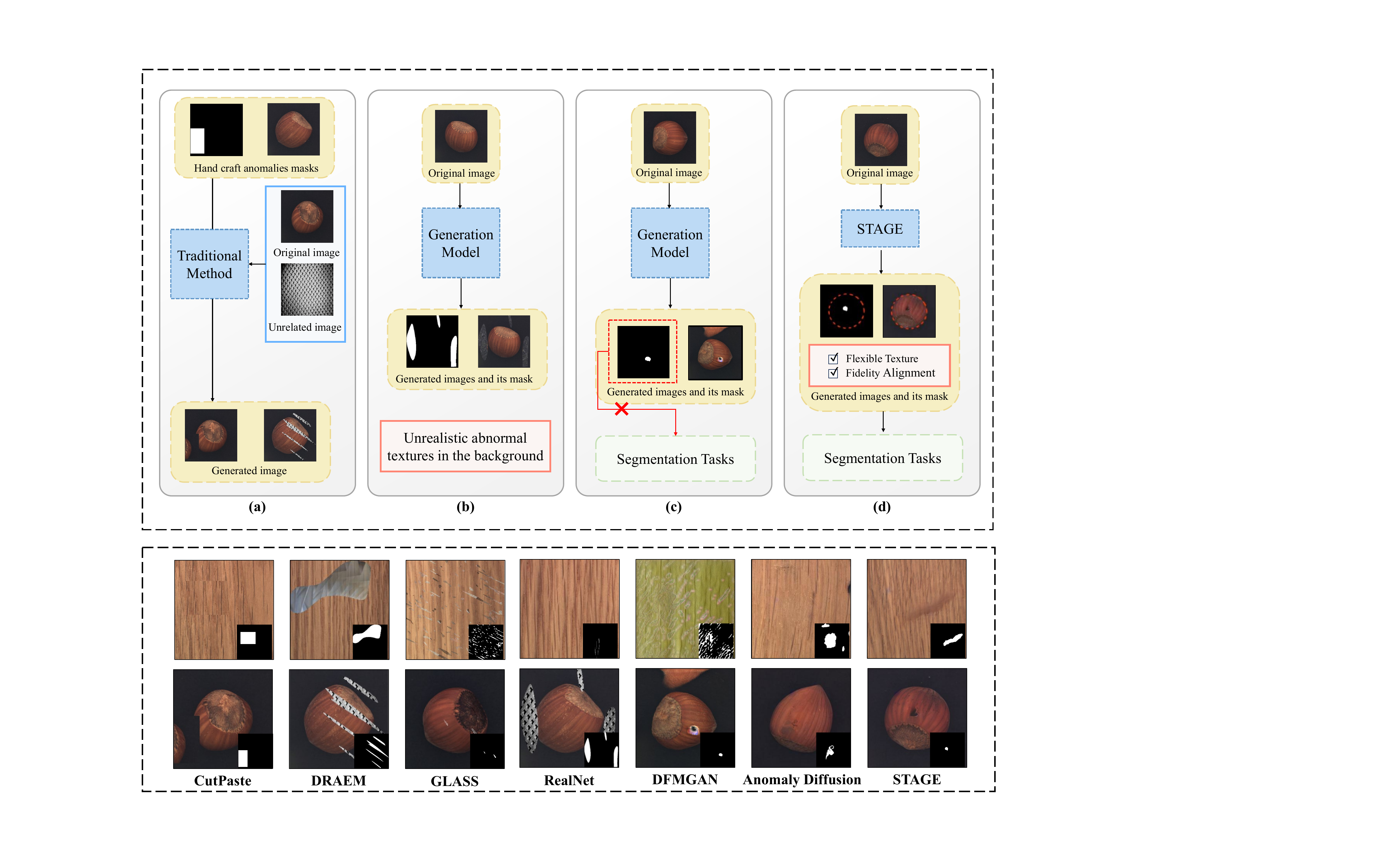}
  \vspace{-20pt}
    \caption{
        \textit{\textbf{Top:}} Traditional and generative models still compromise between three challenges—(a) Limited Texture Fidelity; (b) Misalignment between the abnormal region and background; (c) Ignoring pixel-level anomaly segmentation. \textbf{(d) STAGE overcomes them simultaneously.}
        \textit{\textbf{Bottom:}} Generated abnormal samples from different models trained on {wood} and {hazel\_nut}. STAGE addresses these issues best.
    }
  \label{intro}
\end{figure}
\section{Introduction}\label{submission}
% \textbf{Segmentation-oriented Industrial Anomaly Synthesis.} 

In modern industrial manufacturing, anomaly segmentation has emerged as an essential vision task for ensuring product quality and promptly detecting latent anomalies. Its goal is precise pixel-level localization of anomalies. However, the development of high-performance anomaly segmentation models faces a fundamental bottleneck. Training such models necessitates a substantial quantity of abnormal samples and meticulously annotated, high-quality pixel-level annotations. Moreover, in real-world industrial production scenarios, the high production yield ratio implies that obtaining a sufficient number of naturally occurring abnormal samples is extremely challenging~\cite{Baitieva_2024_CVPR,pmlr-v202-chu23b,qiu2022latent}. Furthermore, the cost associated with pixel-level annotation, which often requires painstaking human labor, is prohibitively expensive. To address this data barrier, researchers have explored a diverse array of techniques. Training-free methods~\cite{li2021cutpaste,zavrtanik2021draem} and deep generative approaches~\cite{dfmgan,hu2024anomalydiffusion} have been actively used today to generate synthetic abnormal samples. Despite these concerted efforts, existing segmentation-oriented industrial anomaly synthesis (SIAS) methods still fall short of the demands of segmentation-oriented applications.

% \textbf{Challenges.} 
As illustrated in Fig.~\ref{intro}, current SIAS methods fail to simultaneously address the three key issues that are essential for industrial anomaly segmentation. These issues include:
(\romannumeral1) \textbf{Limited Texture Fidelity.} Hand-crafted SIAS methods such as Cutpaste~\cite{li2021cutpaste} and DRAEM~\cite{zavrtanik2021draem} are training-free and simple to apply, but lack the representational capacity to synthesize realistic textures, as shown in Fig.~\ref{intro}(a).
(\romannumeral2) \textbf{Misalignment between Abnormal Region and Background.} Deep generative methods~\cite{niu2020defect,liu2019multistage} are known to synthesize more realistic anomalies, but they often struggle to achieve precise pixel-level alignment between the abnormal regions and the background. For instance, DFMGAN~\cite{dfmgan} frequently displaces anomalies from their intended regions, as shown in Fig.~\ref{intro}(b).
(\romannumeral3) \textbf{Lack of Segmentation-oriented Optimization.} Numerous methods focus on classification and localization, often overlooking the critical role of anomaly synthesis in anomaly segmentation. Although AnomalyDiffusion~\cite{hu2024anomalydiffusion} leverages textual and positional cues to control anomalies, its optimization goal is mainly oriented towards detection, yielding suboptimal pixel-level results, as shown in Fig.~\ref{intro}(c).

To address these challenges, we propose \textbf{STAGE}: {S}egmen{T}ation-oriented {A}nomaly synthesis via {G}raded diffusion with {E}xplicit mask alignment, specifically designed to overcome the limitations of existing SIAS methods. Our method is characterized by three key components: \textbf{Anomaly Inference (AIF)}, \textbf{Graded Diffusion (GD)}, and \textbf{Explicit Mask Alignment (EMA)}. (\romannumeral1) To emphasize anomalous regions while preserving known background structure, AIF imposes distribution-level constraints and performs conditional sampling directly on the posterior, ensuring strict mask-wise disentanglement. Unlike common methods such as RePaint~\cite{lugmayr2022repaint}, which integrates background and generated content throughout denoising without preventing background leakage, AIF leverages the known background within the denoising distribution and infers anomalies only inside the mask. (\romannumeral2) Since tiny anomalies are easily suppressed by the dominant background during generation, GD adopts a dual-branch denoising strategy: the anomaly-only branch is dedicated to modeling abnormal regions and is selectively activated within a designated interval, while the anomaly-aware branch governs the remaining steps. This alternating scheme preserves small or low-contrast anomalies throughout the diffusion trajectory. (\romannumeral3) Because SIAS often relies on global background information to generate visually coherent results~\cite{jin2024dualanodiff,yang2025defect}, EMA is designed to synthesize accurate anomaly content under dynamic background guidance. It gradually adjusts the mask via a linear schedule, enabling a smooth transition from global context modeling to precise, mask-aligned anomaly synthesis. Together, these components enable STAGE to achieve high-quality, segmentation-oriented anomaly synthesis beyond simple copy-paste strategies.

The main contributions of this paper are summarized as follows:
\begin{itemize}

\item To boost the model's ability to synthesize anomaly textures, we propose an anomaly inference mechanism. This background-guided formulation extends the diffusion sampling process with clean background information, suppresses redundant background reconstruction, and enhances the model's focus on anomaly synthesis within masked regions.

\item To prevent tiny anomalies from being overlooked, we propose a dual-branch graded diffusion strategy. During training, an auxiliary anomaly-only branch is optimized to model abnormal regions. During inference, it is periodically activated within a designated timestep interval and integrated with the main anomaly-aware branch, thereby preserving fine-grained anomalies throughout the entire denoising trajectory.
\item To guide SIAS at mask-designated positions and handle anomaly–background transitions, we propose the Explicit Mask Alignment (EMA) strategy. By progressively adjusting the anomaly mask over time, EMA enhances spatial coherence and edge alignment, thereby improving downstream segmentation performance.

\end{itemize}

The remainder of this paper is structured as follows. Section~\ref{sec:related} reviews prior work on industrial anomaly synthesis and segmentation. Section~\ref{sec:method} presents the proposed STAGE framework in detail, highlighting its key components. Section~\ref{sec:experiments} outlines the experimental setup and reports extensive comparisons, ablation studies on multiple datasets, and corresponding analyses. Finally, Section~\ref{sec:conclusion} concludes this study.

% \vspace{-10pt}
\section{Related Work}\label{sec:related}

\subsection{Anomaly Synthesis}
Anomaly synthesis is widely used to compensate for the scarcity of abnormal samples. Training-free methods \cite{lyu2024reb, zhang2023destseg} rely on hand-crafted approaches (e.g., cropping and pasting anomalies), but often lack realism. GAN-based models such as SDGAN \cite{niu2020defect}, Defect-GAN \cite{zhang2021defect}, and ConGAN \cite{du2022new} improve realism yet require large abnormal datasets and offer limited controllability. Nevertheless mask-guided variants such as DFMGAN \cite{dfmgan} still suffer from poor alignment.
Recently, diffusion-based methods \cite{dai2024seas, sun2024cut, eccv2025} have shown improved realism via explicit sampling and cross-modal guidance. However, most of them are computationally expensive and designed for detection rather than segmentation. In contrast, STAGE integrates anomaly inference, graded diffusion, and EMA, enabling efficient, segmentation-oriented anomaly synthesis.

\begin{table}[ht]
  \centering
  \scriptsize
   \caption{Key notations and descriptions in STAGE.}
  \resizebox{0.48\textwidth}{!}{
  \begin{tabular}{ll}
    \toprule
    \textbf{Symbol} & \textbf{Description} \\
    \midrule
    $T$                  & Total number of diffusion steps \\
    $\mathbf{x}^{back}$  & Latent representing the background \\
    $\mathbf{x}_0$       & Latent representation encoded from raw images \\
    $\mathbf{x}^p$       & Latent representing anomaly content \\
    $\mathbf{\hat{x}}_0$ & Synthesized latent approximating raw-image representation \\
    $\mathbf{\hat{x}}^p$ & Synthesized latent encoding anomaly content \\
    $M_0$                & Ground-truth masks for abnormal raw images \\
    $E$                  & Textual embedding latent \\
    $\mathbf{t}_s$       & EMA threshold step \\
    $\mathbf{x}_t$       & Noisy latent at time step $t$ \\
    $\mathbf{x}^p_t$     & Noisy latent of anomaly content at time step $t$ \\
    $\mathbf{x}^{back}_t$& Noisy latent of normal background at time step $t$ \\
    $M_{t}$              & Noisy anomaly mask at time step $t$ \\
    $M_t^p$              & Anomaly mask at step $t$ under the EMA strategy \\
    $\beta_t$            & Variance of added noise at time step $t$ \\
    $\boldsymbol{\epsilon}$ & Standard Gaussian noise vector \\
    $\alpha_t,\bar{\alpha}_t$ 
                         & Where $\alpha_t{=}1{-}\beta_t,\;\bar{\alpha}_t{=}\prod_{s=1}^t\alpha_s$ \\
    $q(\boldsymbol{x}_t|\boldsymbol{x}_{t-1})$
                         & Forward diffusion process for raw-image latents \\
    $p_\theta(\boldsymbol{x}_{t-1}|\boldsymbol{x}_t)$
                         & Reverse denoising process for raw-image latents \\
    $q(\mathbf{x}^{back}_{t-1}|\mathbf{x}^{back})$
                         & Forward diffusion process for background latents \\
    $q(M_{t-1}|M_0)$     & Forward diffusion process for ground-truth masks \\
    $p_\theta(\mathbf{x}^p_{t}|\mathbf{x}^p_{t+1})$
                         & Reverse denoising process for anomaly latents \\
    \bottomrule
  \end{tabular}
  }
  \label{symbol}
\end{table}
\vspace{-6pt}

\begin{figure*}[t]
\centering
\includegraphics[width=1\textwidth]{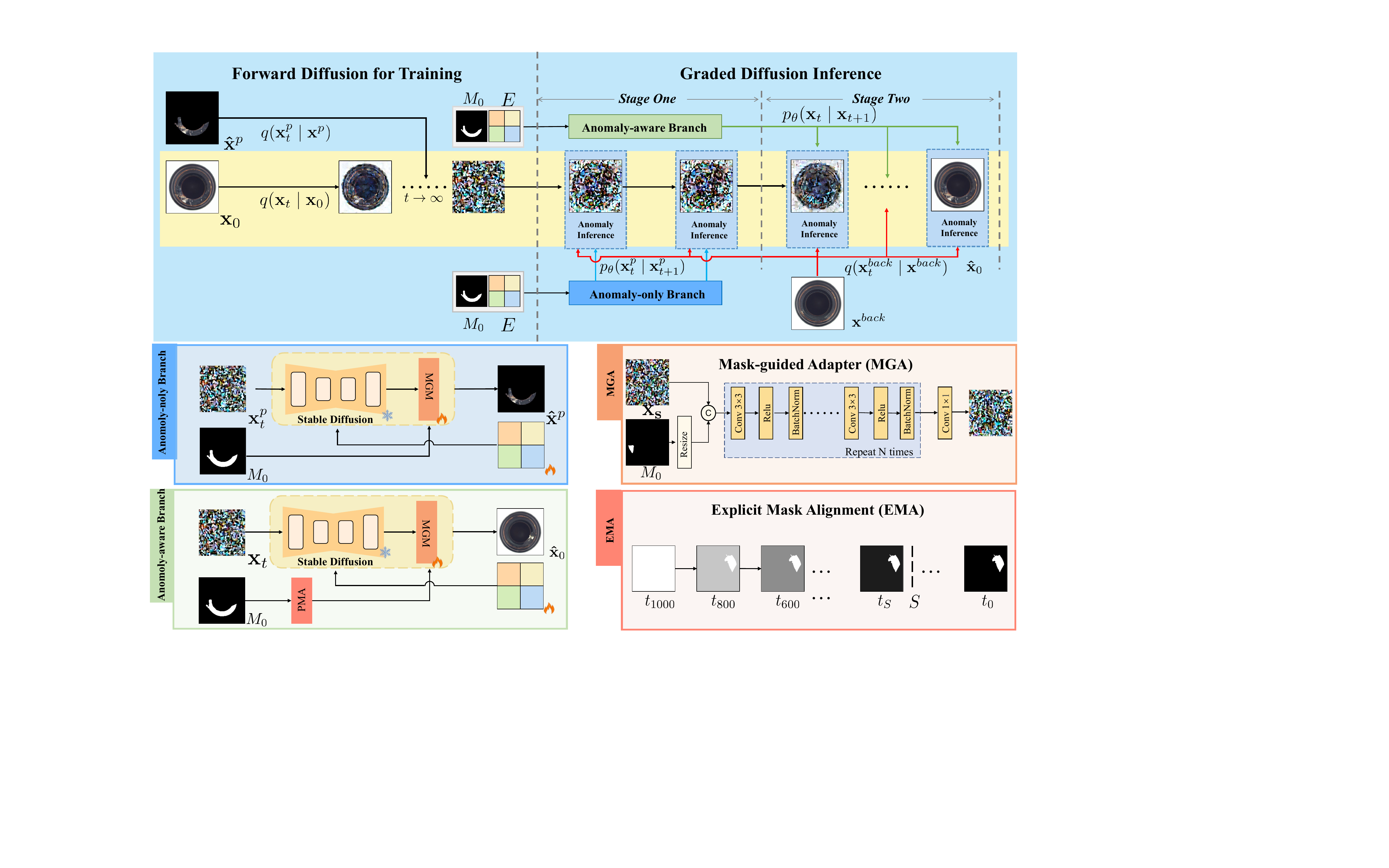}
\caption{
The architecture of the proposed STAGE framework. 
1) \textbf{Anomaly Inference}: Clean background information is incorporated as a condition in the denoising process, enabling STAGE to focus more precisely on foreground anomalies.
2) \textbf{Graded Diffusion}: The anomaly-aware and anomaly-only branches alternately model abnormal regions during denoising, ensuring that tiny anomalies are not overlooked.
3) \textbf{Explicit Mask Alignment}: A time-dependent soft mask progressively converges from a uniform matrix to the ground-truth annotation, allowing the model to leverage contextual information for realistic anomaly synthesis.
4) \textbf{Mask Guidance Adapter}: The anomaly mask serves as one of the inputs to the model, guiding it to refine the anomaly synthesis process.}
    \label{net}
\end{figure*}

\subsection{Anomaly Segmentation}
Anomaly segmentation aims to localize abnormal regions at the pixel level, whereas anomaly detection focuses on image- or region-level anomaly presence, and the two are conceptually related~\cite{qu2025dictas}. AD methods are commonly grouped into reconstruction- and embedding-based approaches~\cite{xu2023fascinating}. Reconstruction-based methods~\cite{recon1,recon2} rely on higher reconstruction errors for anomalies but often overfit, inadvertently reconstructing abnormal regions. Embedding-based methods~\cite{liu2024dual,yang2024slsg,chen2024progressive} detect anomalies as deviations from normal feature distributions, e.g., GLASS~\cite{glass} perturbs embeddings via gradient ascent to highlight weak anomalies. However, both approaches struggle with boundary precision. Recent synthesis-driven work, such as Synth4Seg~\cite{mou2024synth4seg}, employs bi-level optimization but is limited by simplistic CutPaste strategies. In contrast, STAGE targets segmentation-oriented synthesis by coupling realistic texture generation with enhanced pixel-level delineation.

% \vspace{-8pt}
\section{Proposed Method}\label{sec:method}
The proposed STAGE is built upon Latent Diffusion Models (LDM)~\cite{rombach2022high} and incorporates a dual-branch architecture to address anomaly synthesis through three core strategies, as illustrated in Fig.~\ref{net}. (\romannumeral1) \textbf{Anomaly Inference}: By conditioning the reverse diffusion process on clean background information, STAGE suppresses redundant background reconstruction and enhances attention within masked regions, enabling more focused and spatially accurate synthesis. (\romannumeral2) \textbf{Graded Diffusion}: Given that anomalies often occupy only a tiny fraction of the image, STAGE employs a dual-branch denoising strategy, where an anomaly-only branch is optimized to model the abnormal regions during training. During inference, the two branches alternate in a scheduled manner to ensure tiny anomalies are not suppressed by dominant background signals. (\romannumeral3) \textbf{Explicit Mask Alignment (EMA)}: To enable smooth foreground–background interaction, STAGE introduces a time-dependent mask that progressively sharpens toward the ground-truth anomaly mask. This mechanism improves boundary alignment and spatial coherence, thereby enhancing segmentation performance. For textual embedding, we follow the method and parameter settings from AnomalyDiffusion~\cite{hu2024anomalydiffusion}. Finally, all notations used throughout this paper are summarized in Table~\ref{symbol}.

\subsection{Anomaly Inference}
\label{sec:anomaly_inference}

In standard DDPM, the reverse distribution $p_\theta(\mathbf{x}_{t-1} \mid \mathbf{x}_t)$ is approximated by the posterior $q(\mathbf{x}_{t-1} \mid \mathbf{x}_t, \mathbf{x}_0)$, where the unknown $\mathbf{x}_0$ is replaced by the network prediction $\hat{\mathbf{x}}_0$:
\begin{equation}
\label{eq:add}
p_\theta(\mathbf{x}_{t-1} \mid \mathbf{x}_t) := q(\mathbf{x}_{t-1} \mid \mathbf{x}_t, \hat{\mathbf{x}}_0).
\end{equation}
The posterior $q(\mathbf{x}_{t-1} \mid \mathbf{x}_t, \mathbf{x}_0)$ remains Gaussian, with mean and variance:
\begin{equation}
\label{eq:sigma_t}
\begin{split}
\mu_{t-1}^\theta 
&= \frac{\sqrt{\bar{\alpha}_{t-1}} \beta_t}{1 - \bar{\alpha}_t} \mathbf{x}_0  
 + \frac{\sqrt{\alpha_t}(1 - \bar{\alpha}_{t-1})}{1 - \bar{\alpha}_t} \mathbf{x}_t, \\
\sigma_{t-1}^2 
&= \frac{1 - \bar{\alpha}_{t-1}}{1 - \bar{\alpha}_t} \beta_t ,
\end{split}
\end{equation}
where $\alpha_t = 1 - \beta_t$ and $\bar{\alpha}_t = \prod_{s=1}^{t} \alpha_s$.

Different from commonly used methods such as Repaint, we incorporate known background information by explicitly decomposing the image $\mathbf{x}_0$ into anomaly and background components, and adjust the mean of the denoising distribution:
\begin{equation}
\label{eq:decomp}
\mathbf{x}_0 = M_0 \odot \mathbf{x}^p + (1 - M_0) \odot \mathbf{x}^{\text{back}},
\end{equation}
where $M_0$ is a binary mask localizing anomalies, $\mathbf{x}^{p}$ denotes latent anomaly content, and $\mathbf{x}^{\text{back}}$ is the background from a normal image. During inference, we replace $\mathbf{x}^p$ with the model prediction $\hat{\mathbf{x}}^p$ while keeping $\mathbf{x}^{\text{back}}$ fixed.

Following Eq.~\eqref{eq:sigma_t}, the conditional mean $\mu_{t-1}^\theta$ is re-written as follows:
\begin{equation}
\label{eq:mu_mix}
\begin{split}
\mu_{t-1}^\theta 
&= \frac{\sqrt{\bar{\alpha}_{t-1}} \beta_t}{1 - \bar{\alpha}_t} 
   (\hat{\mathbf{x}}^p + \mathbf{x}^{\text{back}}) \\
&\quad + \frac{\sqrt{\alpha_t} (1 - \bar{\alpha}_{t-1})}{1 - \bar{\alpha}_t} 
   (\mathbf{x}_t^p + \mathbf{x}_t^{\text{back}}) \\
&= M_0 \odot \mu_{t-1}^{\hat{\mathbf{x}}^p} 
   + (1 - M_0) \odot \mu_{t-1}^{\mathbf{x}^{\text{back}}},
\end{split}
\end{equation}
where the sub-region means are:
\begin{equation}
\begin{split}
\mu_{t-1}^{\hat{\mathbf{x}}^p} 
&= \frac{\sqrt{\bar{\alpha}_{t-1}} \beta_t}{1 - \bar{\alpha}_t}\, \hat{\mathbf{x}}^p 
 + \frac{\sqrt{\alpha_t}(1 - \bar{\alpha}_{t-1})}{1 - \bar{\alpha}_t}\, \mathbf{x}_t^p, \\
\mu_{t-1}^{\mathbf{x}^{\text{back}}} 
&= \frac{\sqrt{\bar{\alpha}_{t-1}} \beta_t}{1 - \bar{\alpha}_t}\, \mathbf{x}^{\text{back}} 
 + \frac{\sqrt{\alpha_t}(1 - \bar{\alpha}_{t-1})}{1 - \bar{\alpha}_t}\, \mathbf{x}_t^{\text{back}}.
\end{split}
\end{equation}
Here, $\mathbf{x}_t^p = M_0 \odot \mathbf{x}_t$ and $\mathbf{x}_t^{\text{back}} = (1 - M_0) \odot \mathbf{x}_t$ denote anomaly and background components at step $t$. Assuming statistical independence of noise across disjoint regions, we treat them as conditionally independent in reverse inference. The full reverse denoising distribution is then a spatial mixture of two Gaussians:
\begin{equation}
\label{eq:masked_mixture}
\begin{split}
p_\theta(\mathbf{x}_{t-1} \mid \mathbf{x}_t) 
&= M_0 \odot \mathcal{N}\!\left(\mu_{t-1}^{\hat{\mathbf{x}}^p}, \, \sigma_{t-1}^2 \mathbf{I}\right) \\
&\quad + (1 - M_0) \odot \mathcal{N}\!\left(\mu_{t-1}^{\mathbf{x}^{\text{back}}}, \, \sigma_{t-1}^2 \mathbf{I}\right).
\end{split}
\end{equation}
It encodes spatial independence via the binary mask $M_0$, ensuring a clean separation between anomaly synthesis and background recovery. By decoupling anomaly modeling from background reconstruction, STAGE concentrates generative capacity on anomalous regions while preserving high-fidelity normal structure. It guides the sampling trajectory away from regenerating the background and toward generating plausible anomalies for SIAS.

\subsection{Graded diffusion}\label{sec:phased_diffusion}
\begin{algorithm}[ht]
  \caption{Graded Diffusion}
  \label{PDiffusion}
  \centering
  \scriptsize
  % \begin{spacing}{0.9}  % 控制行间距
  \begin{algorithmic}
    \STATE \textbf{Input:} $M_0,\;x^{back}$
    \STATE \textbf{Output:} $\hat{\mathbf{x}}_0$
    \STATE Initialize $\mathbf{x}_T\sim\mathcal{N}(0,\mathbf{I}),\;M^p_T=\mathbf{I},\;T=1000$
    \FOR{$t=T$ \textbf{down to} $0$}
      \STATE $M^p_{t-1}\gets \mathrm{EMA}(M^p_t,t)$; \quad Sample $\mathbf{x}^{back}_{t-1}\sim q(\mathbf{x}^{back}_{t-1}\mid \mathbf{x}^{back})$
      \STATE Sample $M_{t-1}\sim q(M_{t-1}\mid M_0)$
      \IF{$t\in[d_1,d_2]$}
        \STATE Sample $\hat{\mathbf{x}}^p_{t-1}$ from anomaly-only branch: \quad $\hat{\mathbf{x}}^p_{t-1}=M_0\odot\hat{\mathbf{x}}^p_{t-1}+(1-M_0)\odot M_{t-1}$
        \STATE $\hat{\mathbf{x}}_{t-1}=M_0\odot\hat{\mathbf{x}}^p_{t-1}+(1-M_0)\odot\mathbf{x}^{back}_{t-1}$ 
      \ELSE
        \STATE Sample $\hat{\mathbf{x}}_{t-1}$ from anomaly-aware branch: \quad $\hat{\mathbf{x}}_{t-1}=M^p_{t-1}\odot\hat{\mathbf{x}}_{t-1}+(1-M^p_{t-1})\odot\mathbf{x}^{back}_{t-1}$
        \STATE $\hat{\mathbf{x}}^p_{t-1}=M_0\odot\hat{\mathbf{x}}_{t-1}+(1-M_0)\odot M_{t-1}$
      \ENDIF
    \ENDFOR
  \end{algorithmic}
  % \end{spacing}
\end{algorithm}
In practical industrial scenarios, background regions often dominate the image, while anomalies occupy only a small fraction. This severe imbalance causes anomaly features to be easily overlooked during inference, leading to misalignment between synthesized content and anomaly masks. To mitigate this, we propose \textit{Graded Diffusion}, a dual-branch denoising strategy in which global context and localized anomaly features are learned separately during training and dynamically fused during inference through graded sampling. As illustrated in Algorithm~\ref{PDiffusion}, the two branches are trained and sampled in a coordinated manner to ensure that anomaly regions receive sufficient generative focus.\\
During training, the anomaly latent $\mathbf{x}_0$ is fed into the \textit{anomaly-aware branch}, which learns to reconstruct the entire image with contextual consistency. Simultaneously, an anomaly-only input $\mathbf{x}^p$ is constructed by masking $\mathbf{x}_0$ with its annotation $M_0$, thereby isolating the abnormal region and suppressing the background. This masked latent is then used to train the \textit{anomaly-only branch}, which focuses exclusively on learning fine-grained abnormal features. During inference, to prevent the loss of anomaly information, the anomaly-only branch is activated within a designated interval $\left[d_1, d_2\right]$ to generate localized anomaly content, particularly when it is most likely to be suppressed by dominant background features. The synthesized anomaly content is merged with the known background to ensure spatial alignment with the target anomaly regions. When outside this interval, the anomaly-aware branch governs the sampling process, and soft fusion is applied via the progressive mask $M^p_t$ (introduced in Sec.~\ref{sec:EMA}). The overall update rule for the synthesized sample at each timestep is given by:
\begin{align}
    \hat{\mathbf{x}}_{t-1} &=
    \begin{cases}
    M_0 \odot \hat{\mathbf{x}}^p_{t-1} + (1 - M_0) \odot \mathbf{x}^{\mathrm{back}}_{t-1}, \\
     \text{if } t \in [d_1, d_2], \\[6pt]
    M^p_t \odot \hat{\mathbf{x}}_{t-1} + (1 - M^p_{t-1}) \odot \mathbf{x}^{\mathrm{back}}_{t-1}, 
    \\ \text{otherwise}.
    \end{cases} \notag \\
    \hat{\mathbf{x}}^p_{t-1} &=
    \begin{cases}
    \text{output from anomaly-only branch}, \\
    \text{if } t \in [d_1, d_2], \\[6pt]
    M_0 \odot \hat{\mathbf{x}}_{t-1} + (1 - M_0) \odot M_{t-1}, \\
    \text{otherwise}.
    \end{cases}
    \label{xp}
\end{align}

% Meanwhile, to maintain the continuity of anomaly representation across all timesteps, we also update $\hat{\mathbf{x}}^p_{t-1}$ at each step in the following Eq.~\eqref{xp}. It ensures that even when the anomaly-only branch is inactive, the corresponding latent anomaly region continues to update, enabling temporally consistent anomaly synthesis.
% By alternating between anomaly-aware and anomaly-only branches during inference, this graded sampling strategy prevents the model from neglecting small or low-contrast anomalies, which would otherwise be unlikely to persist or accumulate across denoising steps.

Meanwhile, to maintain the continuity of anomaly representation across all timesteps, we also update $\hat{\mathbf{x}}^p_{t-1}$ at each step in the following Eq.~\eqref{xp}. It ensures that even when the anomaly-only branch is inactive, the corresponding latent anomaly region continues to update, enabling temporally consistent anomaly synthesis. By alternating between anomaly-aware and anomaly-only branches during inference, this graded sampling strategy effectively prevents the model from neglecting small or low-contrast anomalies, which would otherwise be unlikely to persist or accumulate across denoising steps, thereby ensuring that subtle anomaly signals are continuously preserved and gradually reinforced throughout the generation process.

\subsection{Explicit Mask Alignment} \label{sec:EMA}
Neglecting the surrounding context often leads to semantic mismatches and implausible structures in reconstructed abnormal regions~\cite{lugmayr2022repaint}. To address this issue, we propose the \textbf{Explicit Mask Alignment (EMA)} strategy, which modulates the fusion of contextual and anomalous information throughout the denoising process, as illustrated in Fig.~\ref{net}. Specifically, since low-frequency components such as global structure are typically restored earlier in diffusion-based denoising, while high-frequency details (e.g., boundaries and textures) emerge in later steps~\cite{song2021scorebased}, EMA employs an adaptive mask that gradually transitions from an all-one matrix to the real anomaly mask as the time step $t$ decreases. This design ensures that early steps focus more on preserving the global context, while later steps progressively emphasize reconstructing the anomalous regions with precise details. Formally, we initialize the mask as an all-one matrix $M^f$, and define its linear transition toward the mask annotation $M_0$ via:

\begin{equation}
\label{eq:mask_schedule}
\begin{aligned}
M^p_t &= \zeta(t)\, M^c + M_0, \\
M^c &= \mathbf{1} - M_0,\\
\zeta(t) &=
\begin{cases}
\dfrac{t - t_s}{T - t_s}, & t > t_s,\\[3pt]
0, & t \le t_s.
\end{cases}
\end{aligned}
\end{equation}

And it allows contextual information to dominate in early steps, while gradually converging to $M_0$ as $t \leq t_s$, thereby enabling more precise and localized SIAS. To analyze EMA’s effectiveness, we construct a pixel-wise optimization model, where $\hat{x}_{0,i}^p(t)$ and $\hat{x}_{0,i}^{\mathrm{back}}(t)$ denote the outputs of the network at pixel $i$ and timestep $t$, respectively. Then $\hat{x}_{0,i}(w)$ is fused using a weight $w \in [0,1]$:

\begin{equation}
\hat{x}_{0,i}(w) = w\,\hat{x}^p_{0,i}(t) + (1 - w)\,\hat{x}^{\mathrm{back}}_{0,i}(t),
\end{equation}
where $w = M_{0,i}$ corresponds to the hard binary mask, and $w = M^p_{t,i}$ corresponds to our EMA scheme. Since the error is defined as:
\begin{equation}
E_i(w, t) = \left[ w\,\delta_{p,i}(t) + (1 - w)\,\delta_{b,i}(t) \right]^2,
\label{eq:pixel_error}
\end{equation}
where $\delta_{p,i}(t) = \hat{x}^p_{0,i}(t) - x^p_{0,i}$ and $\delta_{b,i}(t) = \hat{x}^{\mathrm{back}}_{0,i}(t) - x^{\mathrm{back}}_{0,i}$. In our EMA scheme, the fusion weight for pixel \(i\) at time \(t\) is given by
\[
w_i^{\mathrm{EMA}} := M^p_{t,i},
\]
where \(M^p_{t,i}\) is defined in Eq.~\eqref{eq:mask_schedule}. This corresponds to the soft mask coefficient used for blending predictions from the anomaly and background branches at pixel \(i\). Then the superiority of EMA over binary masks is demonstrated through Theorem~\ref{lemma:ema_optimality}.

\begin{theorem}[Near-optimality of EMA]\label{lemma:ema_optimality}
Under the assumptions that the per-pixel prediction errors
\(\delta_{p,i}(t)\) and \(\delta_{b,i}(t)\) are \(L\)-Lipschitz‐continuous in \(t\)
and the total number of diffusion steps \(T\) is sufficiently large, the Explicit Mask Alignment (EMA)
strategy incurs only a vanishing excess reconstruction error compared to the per-step optimal weights \(w_i^*(t)\).

Concretely, if the deviation of the linear EMA schedule from the optimal weight satisfies
\[
\epsilon_i(t) \;=\;\bigl|w_i^*(t) - M^p_{t,i}\bigr|\;\le\;\frac{C}{T}
\quad\text{for some constant }C,
\]
then the total error increase over all pixels admits the bound
\begin{equation}
\begin{aligned}
&\sum_i \big(E_i(w_i^{\mathrm{EMA}},t)-E_i(w_i^*(t),t)\big)
\le \\
&\sum_i (\delta_{p,i}(t)-\delta_{b,i}(t))^2\,\epsilon_i(t)^2 := \mathcal{O}(T^{-2}).
\end{aligned}
\end{equation}

Hence, EMA achieves a near-optimal reconstruction error with the excess bounded by \(\mathcal{O}(1/T^2)\).
\end{theorem}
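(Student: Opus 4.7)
The plan is to exploit the fact that $E_i(w,t)$ is a perfect square in $w$ at each fixed pixel $i$ and timestep $t$, so the minimizer $w_i^*(t)$ is characterized by a one-variable first-order condition and the excess error admits a clean closed-form expression. First I would differentiate $E_i(w,t) = [\,w\,\delta_{p,i}(t) + (1-w)\,\delta_{b,i}(t)\,]^2$ with respect to $w$ and set the result to zero, yielding (whenever $\delta_{p,i}(t) \neq \delta_{b,i}(t)$)
\begin{equation}
w_i^*(t) = \frac{\delta_{b,i}(t)}{\delta_{b,i}(t) - \delta_{p,i}(t)}, \qquad E_i(w_i^*(t),t) = 0.
\end{equation}
Intuitively, at $w_i^*(t)$ the anomaly and background prediction errors cancel exactly, driving the pixel-wise reconstruction error to zero.

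Next, since $E_i(\cdot,t)$ is a quadratic polynomial in $w$ with leading coefficient $(\delta_{p,i}(t)-\delta_{b,i}(t))^2$ and vanishing minimum, the second-order Taylor expansion around $w_i^*(t)$ is exact (not merely an approximation), giving
\begin{equation}
E_i(w,t) - E_i(w_i^*(t),t) = \bigl(\delta_{p,i}(t) - \delta_{b,i}(t)\bigr)^2 \bigl(w - w_i^*(t)\bigr)^2.
\end{equation}
Specializing to $w = w_i^{\mathrm{EMA}} = M^p_{t,i}$ produces the pixel-wise excess $(\delta_{p,i}(t)-\delta_{b,i}(t))^2 \epsilon_i(t)^2$; summing over all $i$ reproduces the first inequality in the claimed bound, in fact as an equality in this quadratic setting.

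To finish, I would combine the hypothesis $\epsilon_i(t) \le C/T$ with a uniform bound $|\delta_{p,i}(t) - \delta_{b,i}(t)| \le B$ (which follows from the $L$-Lipschitz assumption together with boundedness of the network outputs on the finite horizon $[0,T]$) to obtain
\begin{equation}
\sum_i \bigl(\delta_{p,i}(t) - \delta_{b,i}(t)\bigr)^2 \epsilon_i(t)^2 \le N B^2 \frac{C^2}{T^2} = \mathcal{O}(T^{-2}),
\end{equation}
where $N$ is the pixel count, yielding the advertised rate.

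The main obstacle I anticipate is not the algebraic derivation, which collapses to two lines once the quadratic structure is observed, but justifying the standing hypothesis $\epsilon_i(t) \le C/T$. The linear schedule in Eq.~\eqref{eq:mask_schedule} changes by $1/(T-t_s)$ per step, and the $L$-Lipschitz assumption on $\delta_{p,i},\delta_{b,i}$ forces $w_i^*(t)$ to be Lipschitz in $t$ wherever $\delta_{b,i}(t) - \delta_{p,i}(t)$ is bounded away from zero. Care is needed at points where this denominator degenerates: either one imposes a uniform lower bound, or one notes that in the degenerate regime the prefactor $(\delta_{p,i}(t)-\delta_{b,i}(t))^2$ vanishes simultaneously, so the pixel-wise excess error remains small regardless of the discrepancy $\epsilon_i(t)$, preserving the overall $\mathcal{O}(T^{-2})$ rate.
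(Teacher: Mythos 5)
Your proposal follows essentially the same route as the paper's proof: derive \(w_i^*(t)\) from the first-order condition of the convex quadratic \(E_i(\cdot,t)\), write the per-pixel excess as \((\delta_{p,i}(t)-\delta_{b,i}(t))^2\,(w-w_i^*(t))^2\), and combine the hypothesis \(\epsilon_i(t)\le C/T\) with the Lipschitz continuity of \(w_i^*(t)\) to obtain the \(\mathcal{O}(T^{-2})\) rate. Your two refinements --- observing that the perfect-square structure makes the claimed inequality an exact equality with \(E_i(w_i^*(t),t)=0\), and handling the degenerate case \(\delta_{p,i}(t)=\delta_{b,i}(t)\) where the vanishing prefactor saves the bound --- are sound and slightly sharper than the paper's version, which only treats the denominator as bounded away from zero.
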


\begin{proof}
We proceed in two steps: (1) derive the per‐pixel optimal weight and show EMA strictly improves over a binary mask; (2) bound the deviation of the EMA schedule from this optimum and its effect on total error.

\textbf{Step 1.}
For fixed pixel \(i\) and timestep \(t\), define
\[
E_i(w,t)
=\bigl[w\,\delta_{p,i}(t)+(1-w)\,\delta_{b,i}(t)\bigr]^2,
\]
where 
\(\delta_{p,i}(t)=\hat x^p_{0,i}(t)-x^p_{0,i}\) and
\(\delta_{b,i}(t)=\hat x^{\mathrm{back}}_{0,i}(t)-x^{\mathrm{back}}_{0,i}\).
Since \(E_i\) is a convex quadratic in \(w\), its unique minimizer is obtained by
\begin{equation*}
\begin{split}
\frac{\partial E_i}{\partial w}
&=2\bigl[w\,\delta_{p,i}+(1-w)\,\delta_{b,i}\bigr]\cdot(\delta_{p,i}-\delta_{b,i}) \\
&=0 \;\Longrightarrow\;
w_i^*(t)=\frac{-\delta_{b,i}(t)}{\delta_{p,i}(t)-\delta_{b,i}(t)}.
\end{split}
\end{equation*}

If \(\delta_{p,i}(t)\) and \(\delta_{b,i}(t)\) have opposite signs, then \(w_i^*(t)\in(0,1)\)
and by convexity,
\[
E_i\bigl(w_i^*(t),t\bigr)
<\min\{E_i(0,t),\,E_i(1,t)\}.
\]
Otherwise the minimizer lies at \(w=0\) or \(w=1\), to which the EMA schedule
\(M^p_{t,i}\) also converges as \(t\) decreases. Thus EMA never exceeds the error of
a binary mask, and strictly improves on boundary pixels where \(w_i^*(t)\in(0,1)\).

\textbf{Step 2.}
By assumpting \(\delta_{p,i}(t)\) and \(\delta_{b,i}(t)\) are \(L\)-Lipschitz‐continuous in \(t\).  Define the mapping
\[
g(x,y) \;=\; -\frac{y}{x-y}.
\]
Since \(g\) is smooth whenever \(x\!-\!y\) is bounded away from zero, the composition
\[
w_i^*(t) = g\bigl(\delta_{p,i}(t),\,\delta_{b,i}(t)\bigr)
\]
is also Lipschitz‐continuous.  Hence there exists a constant \(L'\) such that
\[
\bigl|w_i^*(t_1)-w_i^*(t_2)\bigr|
\;\le\;
L'\,\bigl|t_1-t_2\bigr|
\quad\forall\,t_1,t_2.
\]

Meanwhile, the linear EMA schedule \(\zeta(t)\) decreases by at most \(1/(T-t_s)\le1/T\)
per timestep, so the deviation
\[
\epsilon_i(t)
=\bigl|w_i^*(t)-M^p_{t,i}\bigr|
\;\le\;\frac{L'}{T}
\;\equiv\;\frac{C}{T}.
\]
Using the convexity bound
\[
E_i(M^p_{t,i},t)-E_i(w_i^*(t),t)
\;\le\;(\delta_{p,i}(t)-\delta_{b,i}(t))^2\,\epsilon_i(t)^2,
\]
and summing over all pixels \(i\), we obtain
\begin{multline*}
\sum_i \bigl[E_i(w_i^{\mathrm{EMA}},t)-E_i(w_i^*(t),t)\bigr] \;\le\; \\[-0.3ex]
\sum_i (\delta_{p,i}(t)-\delta_{b,i}(t))^2\,\frac{C^2}{T^2}
=\;\mathcal{O}\!\Bigl(\tfrac{1}{T^2}\Bigr).
\end{multline*}

This completes the proof that EMA’s excess reconstruction error per step is
\(\mathcal{O}(1/T^2)\), which vanishes as \(T\to\infty\).
\end{proof}

% \vspace{-8pt}

For DDPMs with a sufficiently large number of timesteps ($T \gg 1$), the change between consecutive steps becomes infinitesimally small. As a result, the latent states $x_t$ and $x_{t-1}$ are highly correlated, and the neural network receives only small perturbations in its input across timesteps in practice. \textbf{This smoothness has been empirically observed and theoretically assumed in numerous diffusion-based models~\cite{song2021scorebased,lugmayr2022repaint}. Therefore, assuming that the prediction error terms $\delta_{p,i}(t)$ and $\delta_{b,i}(t)$ are Lipschitz-continuous is a realistic approximation for large-$T$ DDPM models and has been widely recognized} \cite{lee2023convergence,dou2024theory,lyu2024sampling}.  In addition, EMA allows intermediate fusion weights $w \in (0,1)$ where optimal, improvises SIAS over traditional binary masks, and achieves a near-optimal error in an average sense, with provable $\mathcal{O}(1/T^2)$ convergence to the optimal trajectory under standard smoothness assumptions.

\subsection{Mask Guidance Adapter}
To improve spatial alignment for downstream tasks via SIAS, STAGE introduces \emph{Mask Guidance Adapter (MGA)}, as illustrated in Fig.~\ref{net}. MGA is specifically designed to fine-tune the denoising network, without altering the original architecture. It leverages the anomaly mask $M_0$ as prior information to modulate features, encouraging synthesized anomalies to appear only within designated regions. Specifically, after obtaining the output from the frozen denoising network, we concatenate it with the corresponding mask ($M_0$ for the anomaly-only branch, and $M_t^p$ otherwise). The concatenated features are then processed by $N$ repeated blocks consisting of convolution, ReLU, and batch normalization, enabling precise anomaly synthesis aligned with the mask.

\subsection{Loss function}
\label{loss}

Since the anomaly-only branch in STAGE is designed to retain and highlight anomaly information within the masked regions, it is essential to explicitly enforce anomaly generation within these regions. Meanwhile, the anomaly-aware branch integrates contextual information via the EMA strategy, thereby enhancing spatial coherence and anomaly representation. To accommodate the dual-branch architecture, STAGE modifies the standard diffusion loss as follows:
\begin{align}
\label{loss}
\mathcal{L}_{\text{p}} &= \mathbb{E}_{\mathbf{x}_0, \boldsymbol{\epsilon}_1, t} \left[ \left\| \boldsymbol{\epsilon}_1 - \boldsymbol{\epsilon}_{ao}\left(\mathbf{x}_t, t\right) \right\|_{L1} \right] \\
&\quad + \mathbb{E}_{\mathbf{x}_0^p, \boldsymbol{\epsilon}_2, t} \left[ \left\| M^p_t \odot \left( \boldsymbol{\epsilon}_2 - \boldsymbol{\epsilon}_{aa}\left(\mathbf{x}_t^p, t\right) \right) \right\|_{L1} \right], \notag
\end{align}
where $\boldsymbol{\epsilon}_1, \boldsymbol{\epsilon}_2 \sim \mathcal{N}(0,\mathbf{I})$, $\boldsymbol{\epsilon}_{ao}$ and $\boldsymbol{\epsilon}_{aa}$ are the predicted noises from the anomaly-only and anomaly-aware branches, respectively, and $M^p_t$ denotes the EMA-modulated spatial mask at time step $t$.

\section{Experiments}
\label{sec:experiments}

\subsection{Implementations} 

We conduct experiments on two public anomaly datasets, \textbf{MVTec AD}~\cite{bergmann2019mvtec} and \textbf{BTAD}~\cite{btad}. For each anomaly type, we generate synthetic anomaly samples using normal images, masks, and text annotations. Specifically, for \textbf{MVTec AD} we set the prompt to \emph{``class name + anomaly type''}, while for \textbf{BTAD}, where class names are numerical IDs, we uniformly set the prompt to \emph{``damaged''}. In total, 500 anomaly image–mask pairs are generated per type, with about one-third of the real images used for training the segmentation model and the remaining two-thirds reserved for testing.  

We compare our proposed STAGE against six representative SIAS baselines: CutPaste~\cite{li2021cutpaste}, DRAEM~\cite{zavrtanik2021draem}, GLASS~\cite{glass}, DFMGAN~\cite{dfmgan}, AnomalyDiffusion~\cite{hu2024anomalydiffusion}, and RealNet~\cite{zhang2024realnet}. For downstream segmentation, we further evaluate three lightweight networks suitable for real-time industrial applications: SegFormer~\cite{xie2021segformer}, BiSeNet V2~\cite{yu2021bisenet}, and STDC~\cite{stdc}.
For evaluation, we adopt mean Intersection over Union (\textbf{mIoU}) to measure the overlap between predicted segmentation and ground truth, and pixel accuracy (\textbf{Acc}) to measure the proportion of correctly classified pixels.
 In addition to these segmentation-oriented metrics, we also report detection-oriented metrics including AUROC, PRO, F1-score, and Average Precision (AP) as auxiliary references. Higher values of these indicators indicate better model performance.

In terms of hyperparameters, we set $T=1000$ and $t_s=200$ for EMA. In graded diffusion, the anomaly-only branch is activated when $t$ falls within $[1000,800]$ and $[400,300]$. The textual embedding $E$ consists of 8 tokens. We use a batch size of 4 and a learning rate of 0.001. All experiments are conducted on 4 NVIDIA A100 80GB GPUs and trained for approximately 80,000 iterations.

\begin{table*}[htbp]
\centering
% \vskip -0.1in
\caption{Comparison of pixel-level anomaly segmentation (mIoU/Acc) using the real-time SegFormer trained on synthetic MVTec data produced from the proposed STAGE and other existing SIAS methods.}
% \vspace{-5pt}
\resizebox{1\textwidth}{!}{
\begin{tabular}{l|cc|cc|cc|cc|cc|cc|cc}
\hline
\rowcolor{blue!10}
\textbf{Category} 
& \multicolumn{2}{c|}{\textbf{CutPaste}} 
& \multicolumn{2}{c|}{\textbf{DRAEM}} 
& \multicolumn{2}{c|}{\textbf{GLASS}} 
& \multicolumn{2}{c|}{\textbf{RealNet}} 
& \multicolumn{2}{c|}{\textbf{DFMGAN}} 
& \multicolumn{2}{c|}{\textbf{AnomalyDiffusion}} 
& \multicolumn{2}{c}{\textbf{STAGE}} \\
\hline
\rowcolor{blue!10}
& \textbf{mIoU} $\uparrow$ & \textbf{Acc} $\uparrow$ 
& \textbf{mIoU} $\uparrow$ & \textbf{Acc} $\uparrow$ 
& \textbf{mIoU} $\uparrow$ & \textbf{Acc} $\uparrow$ 
& \textbf{mIoU} $\uparrow$ & \textbf{Acc} $\uparrow$ 
& \textbf{mIoU} $\uparrow$ & \textbf{Acc} $\uparrow$ 
& \textbf{mIoU} $\uparrow$ & \textbf{Acc} $\uparrow$ 
& \textbf{mIoU} $\uparrow$ & \textbf{Acc} $\uparrow$ \\
\hline

bottle     & 75.11 & 79.49 & 79.51 & 84.99 & 70.26 & 76.30 & 77.96 & 83.90 & 75.45 & 80.39 & 76.39 & 83.54 & \textbf{90.07} & \textbf{93.81} \\
cable      & 55.40 & 60.49 & 64.52 & 70.77 & 58.81 & 62.32 & 62.51 & 69.27 & 62.10 & 64.87 & 62.49 & 74.48 & \textbf{79.04} & \textbf{87.49} \\
capsule    & 35.15 & 40.29 & 51.39 & 62.32 & 34.12 & 38.04 & 46.76 & 51.91 & 41.29 & 15.83 & 37.73 & 44.72 & \textbf{59.96} & \textbf{70.05} \\
carpet     & 66.34 & 77.59 & 72.57 & 81.28 & 70.11 & 77.56 & 68.84 & 79.15 & 71.33 &  \textbf{83.69} & 64.67 & 73.59 & \textbf{73.23} & {83.24} \\
grid       & 29.90 & 46.72 & \textbf{47.75} &  \textbf{67.85} & 37.43 & 46.30 & 37.55 & 48.86 & 37.73 & 54.13 & 38.70 & 51.82 & {46.32} & {62.37} \\
hazel\_nut & 56.95 & 60.72 & 84.22 & 89.74 & 55.51 & 57.43 & 60.18 & 63.49 & 83.43 & 86.03 & 59.33 & 67.48 & \textbf{88.10} & \textbf{92.84} \\
leather    & 57.23 & 63.49 & 64.12 & 71.49 & 62.05 & 73.38 & 68.29 & 77.16 & 60.96 & 68.02 & 56.45 & 62.51 & \textbf{72.42} & \textbf{81.68} \\
metal\_nut & 88.78 & 90.94 & 93.51 & 96.10 & 88.15 & 90.52 & 91.28 & 94.09 & 92.77 & 94.93 & 88.00 & 91.10 & \textbf{95.00} & \textbf{97.87} \\
pill       & 43.28 & 47.11 & 46.99 & 49.76 & 41.52 & 43.54 & 47.32 & 58.31 & 87.19 & 90.05 & 83.21 & 89.00 & \textbf{90.81} & \textbf{97.06} \\
screw      & 25.10 & 31.35 & 46.96 & 59.03 & 35.94 & 42.37 & 47.12 & 55.17 & \textbf{46.65} & 50.79 & 38.47 & 49.49 & {42.42} & \textbf{57.36} \\
tile       & 85.33 & 91.60 & 89.21 & 93.74 & 85.67 & 90.28 & 83.53 & 87.30 & 88.87 & 91.96 & 84.29 & 89.72 & \textbf{90.76} & \textbf{93.56} \\
toothbrush & 39.40 & 63.93 & 65.35 & 79.43 & 53.75 & 60.46 & 57.68 & 72.03 & 61.00 & 70.50 & 48.68 & 64.41 & \textbf{70.77} & \textbf{88.49} \\
transistor & 65.03 & 71.05 & 59.96 & 62.18 & 29.28 & 30.67 & 63.71 & 66.79 & 73.56 & 78.48 & 79.27 &  \textbf{91.74} & \textbf{79.68} & {85.06} \\
wood       & 49.64 & 60.47 & 67.52 & 73.28 & 50.91 & 53.16 & 61.84 & 89.54 & 67.00 & 80.84 & 60.16 & 74.62 & \textbf{75.94} & \textbf{87.14} \\
zipper     & 65.39 & 71.89 & 69.29 & 79.36 & 69.98 & 79.31 & 68.78 & 78.50 & 66.34 & 70.50 & 65.36 & 72.66 & \textbf{77.22} & \textbf{84.12} \\
\hline

\rowcolor{blue!10}
Average    & 55.87 & 63.81 & 66.86 & 74.75 & 56.23 & 61.44 & 62.89 & 71.70 & 67.71 & 72.07 & 62.88 & 72.06 & \textbf{75.45} & \textbf{84.14} \\
\hline
\end{tabular}
}
\label{Segformer_mvtec}
% \vskip -0.1in
\end{table*}

\begin{table*}[htbp]
\centering
% \vskip -0.1in
\caption{Comparison of pixel-level anomaly segmentation (mIoU/Acc) using the real-time BiseNet V2 trained on synthetic MVTec data produced from the proposed STAGE and other existing AS methods.}
% \footnotesize % Increase font size for better page utilization
\resizebox{1\textwidth}{!}{
\begin{tabular}{l|cc|cc|cc|cc|cc|cc|cc}
\hline
\rowcolor{blue!10}\textbf{Category} 
  & \multicolumn{2}{c|}{\textbf{CutPaste}} 
  & \multicolumn{2}{c|}{\textbf{DRAEM}} 
  & \multicolumn{2}{c|}{\textbf{GLASS}} 
  & \multicolumn{2}{c|}{\textbf{RealNet}} 
  & \multicolumn{2}{c|}{\textbf{DFMGAN}} 
  & \multicolumn{2}{c|}{\textbf{AnomalyDiffusion}} 
  & \multicolumn{2}{c}{\textbf{STAGE}} \\ \hline
\rowcolor{blue!10}
  & \textbf{mIoU} $\uparrow$ & \textbf{Acc} $\uparrow$ 
  & \textbf{mIoU} $\uparrow$ & \textbf{Acc} $\uparrow$ 
  & \textbf{mIoU} $\uparrow$ & \textbf{Acc} $\uparrow$ 
  & \textbf{mIoU} $\uparrow$ & \textbf{Acc} $\uparrow$ 
  & \textbf{mIoU} $\uparrow$ & \textbf{Acc} $\uparrow$ 
  & \textbf{mIoU} $\uparrow$ & \textbf{Acc} $\uparrow$ 
  & \textbf{mIoU} $\uparrow$ & \textbf{Acc} $\uparrow$ \\ 
\hline
bottle      & 71.77 & 78.57 & 75.13 & 79.17 & 57.81 & 60.79 & 72.16 & 75.55 & 64.28 & 71.31 & 75.28 & 85.11 & \textbf{88.69} & \textbf{93.54} \\
cable       & 46.00 & 57.08 & 53.88 & 60.96 & 16.63 & 16.65 & 51.22 & 62.32 & 57.09 & 63.25 & 60.55 & 74.96 & \textbf{78.81} & \textbf{89.01} \\
capsule     & 25.97 & 37.04 & 36.82 & 42.19 & 19.53 & 51.89 & 35.97 & 39.39 & 28.40 & 31.18 & 26.77 & 32.87 & \textbf{49.86} & \textbf{61.25} \\
carpet      & 58.98 & 72.22 & \textbf{68.42} & \textbf{77.21} & 64.77 & 73.93 &  8.98 &  9.01 & 62.13 & 67.98 & 58.18 & 64.69 & 68.05 & 77.07 \\
grid        & 24.68 & 44.17 & \textbf{42.81} & \textbf{63.34} &  6.50 &  6.91 & 10.61 & 11.47 & 10.17 & 15.23 & 18.98 & 24.30 & 38.53 & 55.13 \\
hazel\_nut  & 47.93 & 53.57 & 74.83 & 81.35 & 71.54 & 75.62 & 60.16 & 65.93 & 79.78 & 84.37 & 57.26 & 70.41 & \textbf{84.61} & \textbf{92.20} \\
leather     & 31.11 & 58.36 & 55.07 & 61.58 & 57.98 & 71.84 & 53.77 & 63.85 & 31.77 & 34.82 & 50.02 & 61.60 & \textbf{58.65} & \textbf{76.75} \\
metal\_nut  & 82.95 & 87.73 & 91.58 & 94.73 & 83.82 & 85.42 & 88.38 & 90.73 & 91.17 & 93.57 & 85.52 & 90.20 & \textbf{92.75} & \textbf{97.02} \\
pill        & 55.62 & 67.04 & 45.23 & 48.99 & 23.88 & 24.15 & 72.59 & 86.32 & 82.40 & 84.30 & 80.87 & 87.02 & \textbf{88.15} & \textbf{95.87} \\
screw       &  4.88 &  6.63 & 25.08 & 35.77 & 12.32 & 13.11 & 22.35 & 23.78 & \textbf{38.14} & \textbf{40.36} & 23.23 & 29.91 & 28.67 & 37.27 \\
tile        & 76.25 & 85.75 & 86.17 & 90.45 & 77.32 & 80.28 & 77.16 & 84.84 & 85.69 & 90.12 & 79.32 & 85.63 & \textbf{88.04} & \textbf{91.52} \\
toothbrush  & 35.69 & 50.45 & 57.66 & 79.15 & 38.86 & 51.97 & 32.38 & 37.88 & 48.83 & 58.76 & 44.33 & 69.32 & \textbf{66.69} & \textbf{82.62} \\
transistor  & 44.48 & 51.79 & 59.88 & 65.96 & 44.93 & 53.04 & 61.68 & 68.59 & 76.52 & 82.13 & 76.34 & \textbf{89.94} & \textbf{83.03} & 89.25 \\
wood        & 35.51 & 46.00 & 49.82 & 62.09 & 36.41 & 51.10 & 47.29 & 61.35 & 51.84 & 63.70 & 52.06 & 72.75 & \textbf{68.98} & \textbf{82.98} \\
zipper      & 51.61 & 63.09 & 66.88 & 75.75 & 61.99 & 70.07 & 66.09 & 77.54 & 60.61 & 71.11 & 57.86 & 67.64 & \textbf{74.99} & \textbf{84.32} \\ 
\hline
\rowcolor{blue!10}
Average     & 46.23 & 57.30 & 59.28 & 67.91 & 44.95 & 52.45 & 50.72 & 57.24 & 57.92 & 63.48 & 56.44 & 67.09 & \textbf{70.57} & \textbf{80.39} \\ 
\hline
\end{tabular}
}
\label{mvtec_bise}
 % \vspace{-20pt}
\end{table*}

\begin{table*}[htbp]
\centering
\vskip -0.1in
\caption{Comparison of pixel-level anomaly segmentation (mIoU/Acc) using the real-time STDC trained on synthetic MVTec data produced from the proposed STAGE and other existing AS methods.}
\resizebox{1\textwidth}{!}{
\begin{tabular}{l|cc|cc|cc|cc|cc|cc|cc}
\hline
\rowcolor{blue!10}\textbf{Category} 
  & \multicolumn{2}{c|}{\textbf{CutPaste}} 
  & \multicolumn{2}{c|}{\textbf{DRAEM}} 
  & \multicolumn{2}{c|}{\textbf{GLASS}} 
  & \multicolumn{2}{c|}{\textbf{RealNet}} 
  & \multicolumn{2}{c|}{\textbf{DFMGAN}} 
  & \multicolumn{2}{c|}{\textbf{AnomalyDiffusion}} 
  & \multicolumn{2}{c}{\textbf{STAGE}} \\ \hline
\rowcolor{blue!10}
  & \textbf{mIoU} $\uparrow$ & \textbf{Acc} $\uparrow$ 
  & \textbf{mIoU} $\uparrow$ & \textbf{Acc} $\uparrow$ 
  & \textbf{mIoU} $\uparrow$ & \textbf{Acc} $\uparrow$ 
  & \textbf{mIoU} $\uparrow$ & \textbf{Acc} $\uparrow$ 
  & \textbf{mIoU} $\uparrow$ & \textbf{Acc} $\uparrow$ 
  & \textbf{mIoU} $\uparrow$ & \textbf{Acc} $\uparrow$ 
  & \textbf{mIoU} $\uparrow$ & \textbf{Acc} $\uparrow$ \\ 
\hline
bottle      & 71.37 & 82.19 & 73.31 & 78.23 & 63.22 & 69.25 & 69.44 & 75.68 & 67.66 & 76.52 & 72.66 & 84.94 & \textbf{89.77} & \textbf{93.79} \\
cable       & 46.00 & 57.08 & 50.02 & 58.38 & 49.38 & 57.80 & 35.97 & 38.81 & 57.74 & 62.86 & 59.43 & 74.22 & \textbf{80.63} & \textbf{91.24} \\
capsule     & 21.73 & 30.72 & 36.31 & 41.68 & 22.91 & 27.18 & 31.08 & 34.25 & 25.60 & 27.96 & 22.90 & 26.06 & \textbf{46.65} & \textbf{55.08} \\
carpet      & 50.79 & 66.68 & \textbf{66.28} & \textbf{76.70} & 63.18 & 77.85 & 57.48 & 68.51 & 58.58 & 71.83 & 56.16 & 68.47 & \textbf{70.34} & \textbf{82.14} \\
grid        & 15.24 & 25.75 & \textbf{30.29} & \textbf{41.50} & 19.89 & 24.72 &  5.37 &  5.85 &  1.39 &  1.39 & 16.20 & 24.63 & 29.45 & 30.40 \\
hazel\_nut  & 58.48 & 65.59 & 78.75 & 83.66 & 68.57 & 85.83 & 70.16 & 82.40 & 81.77 & 84.66 & 61.83 & 92.42 & \textbf{85.23} & \textbf{93.04} \\
leather     & 38.12 & 58.63 & 44.63 & 56.84 & 57.53 & 73.90 & 36.76 & 53.88 & 21.29 & 22.28 & 46.98 & 59.89 & \textbf{61.64} & \textbf{76.44} \\
metal\_nut  & 81.13 & 86.63 & 91.12 & 94.08 & 83.97 & 89.37 & 86.85 & 91.45 & 90.68 & 92.73 & 85.81 & 90.06 & \textbf{93.11} & \textbf{97.05} \\
pill        & 50.00 & 60.28 & 55.47 & 61.05 & 44.48 & 48.11 & 63.96 & 65.96 & 80.41 & 82.55 & 78.23 & 84.35 & \textbf{85.87} & \textbf{94.17} \\
screw       &  2.80 &  4.98 & 16.16 & 23.05 & 16.81 & 19.33 & \textbf{34.93} & \textbf{38.76} & 17.93 & 18.76 &  1.27 &  2.00 & 16.60 & 19.21 \\
tile        & 69.86 & 78.18 & 84.75 & \textbf{91.31} & 79.86 & 88.65 & \textbf{85.36} & 89.72 & 70.29 & 77.70 & 76.96 & 84.07 & 83.04 & 85.98 \\
toothbrush  & 41.19 & 52.81 & 53.72 & 76.55 & 37.46 & 40.91 & 33.85 & 43.03 & 36.78 & 38.94 & 35.39 & 48.93 & \textbf{67.61} & \textbf{82.34} \\
transistor  & 58.24 & 68.80 & 65.57 & 80.31 & 62.64 & 69.32 & 62.57 & 72.45 & 78.38 & 87.23 & 71.96 & 83.28 & \textbf{87.26} & \textbf{92.75} \\
wood        & 31.75 & 43.27 & 55.25 & 60.82 & 36.31 & 45.67 & 37.23 & 43.37 & 26.36 & 33.13 & 48.90 & 62.57 & \textbf{71.85} & \textbf{82.65} \\
zipper      & 47.51 & 59.24 & \textbf{61.03} & \textbf{68.53} & 59.07 & 69.39 & 60.04 & 71.52 & 44.42 & 51.83 & 56.77 & 66.66 & \textbf{71.28} & \textbf{80.93} \\ 
\hline
\rowcolor{blue!10}
Average     & 45.41 & 55.90 & 57.51 & 66.18 & 51.02 & 59.15 & 49.27 & 56.24 & 52.76 & 58.81 & 52.76 & 63.50 & \textbf{69.36} & \textbf{77.15} \\ 
\hline
\end{tabular}
}
\label{mvtec_stdc}
\end{table*}

\begin{table*}[htbp]
  \centering
  \caption{Comparison of pixel-level anomaly segmentation (mIoU / Acc) on extended BTAD across three real-time backbones.}
   % \vspace{-3pt}
  \label{tab:btad_all}
  \resizebox{1\textwidth}{!}{
  \begin{tabular}{c|c|cc|cc|cc|cc|cc|cc|cc}
    \hline
    \rowcolor{blue!10}\textbf{Backbone} & \textbf{Category} 
    & \multicolumn{2}{c|}{\textbf{CutPaste}} 
    & \multicolumn{2}{c|}{\textbf{DRAEM}} 
    & \multicolumn{2}{c|}{\textbf{GLASS}} 
    & \multicolumn{2}{c|}{\textbf{RealNet}} 
    & \multicolumn{2}{c|}{\textbf{DFMGAN}} 
    & \multicolumn{2}{c|}{\textbf{AnomalyDiffusion}} 
    & \multicolumn{2}{c}{\textbf{STAGE}} \\
    \hline
    \rowcolor{blue!10} &  & mIoU $\uparrow$ & Acc $\uparrow$ 
     & mIoU $\uparrow$ & Acc $\uparrow$ 
     & mIoU $\uparrow$ & Acc $\uparrow$ 
     & mIoU $\uparrow$ & Acc $\uparrow$ 
     & mIoU $\uparrow$ & Acc $\uparrow$ 
     & mIoU $\uparrow$ & Acc $\uparrow$ 
     & mIoU $\uparrow$ & Acc $\uparrow$ \\
    \hline
    \multirow{3}{*}{SegFormer} 
    & 01 & 66.94 & 78.20 & 67.86 & 80.14 & 68.02 & 79.57 & 67.17 & 80.20 & 67.02 & 78.03 & 66.55 & 76.31 & \textbf{73.12} & \textbf{84.44} \\
    & 02 & 65.04 & 83.64 & 69.52 & 82.96 & 69.99 & 83.58 & 70.64 & 83.90 & 68.75 & 84.92 & 68.06 & 84.74 & \textbf{70.75} & \textbf{83.44} \\
    & 03 & 50.96 & 60.41 & 50.39 & 54.30 & 51.77 & 53.53 & 48.76 & 57.50 & 38.95 & 41.55 & 54.85 & 80.20 & \textbf{71.38} & \textbf{86.41} \\
    \cdashline{1-16}
    \multirow{3}{*}{BiseNet V2} 
    & 01 & 57.15 & 69.88 & 49.16 & 63.48 & 44.09 & 50.57 & 45.45 & 57.65 & 49.49 & 59.20 & 46.66 & 55.18 & \textbf{63.24} & \textbf{78.59} \\
    & 02 & 59.45 & 82.05 & 66.46 & 80.29 & 66.37 & 79.46 & 66.11 & 81.67 & 66.02 & 79.21 & 65.57 & \textbf{84.00} & \textbf{66.50} & 79.35 \\
    & 03 & 31.84 & 40.62 & 36.15 & 39.04 & 30.80 & 37.15 & 29.55 & 33.11 & 20.12 & 21.48 & 42.27 & 74.41 & \textbf{65.59} & \textbf{85.64} \\
    \cdashline{1-16}
    \multirow{3}{*}{STDC} 
     & 01 & 48.06 & 59.86 & 42.17 & 65.36 & 45.51 & 60.12 & 32.91 & 49.21 & 44.68 & 51.71 & 44.85 & 55.29 & \textbf{64.07} & \textbf{75.94} \\
    & 02 & 59.80 & 77.57 & 64.96 & \textbf{84.32} & 65.02 & 81.94 & 64.00 & 82.64 & 64.85 & 75.32 & 64.73 & 78.93 & \textbf{66.59} & 80.08 \\
    & 03 & 19.76 & 25.20 & 36.14 & 38.80 & 17.04 & 28.01 & 22.57 & 24.79 & 14.67 & 16.55 & 41.71 & 65.45 & \textbf{62.98} & \textbf{75.45} \\
    \hline
  \end{tabular}
  \label{seg_btad}
  }
\end{table*}

\begin{table*}[htbp]
\centering
\caption{Comparison of AUROC and PRO for anomaly synthesis across various methods on extended MVTec AD.}
\resizebox{1\textwidth}{!}{
\begin{tabular}{l|cc|cc|cc|cc|cc|cc|cc}
\hline
\rowcolor{blue!10} \textbf{Category} 
& \multicolumn{2}{c|}{\textbf{CutPaste}} 
& \multicolumn{2}{c|}{\textbf{DRAEM}} 
& \multicolumn{2}{c|}{\textbf{GLASS}} 
& \multicolumn{2}{c|}{\textbf{DFMGAN}} 
& \multicolumn{2}{c|}{\textbf{RealNet}} 
& \multicolumn{2}{c|}{\textbf{AnomalyDiffusion}} 
& \multicolumn{2}{c}{\textbf{STAGE}} \\ \hline
\rowcolor{blue!10} & \textbf{AUROC} $\uparrow$ & \textbf{PRO} $\uparrow$ 
& \textbf{AUROC} $\uparrow$ & \textbf{PRO} $\uparrow$ 
& \textbf{AUROC} $\uparrow$ & \textbf{PRO} $\uparrow$ 
& \textbf{AUROC} $\uparrow$ & \textbf{PRO} $\uparrow$ 
& \textbf{AUROC} $\uparrow$ & \textbf{PRO} $\uparrow$ 
& \textbf{AUROC} $\uparrow$ & \textbf{PRO} $\uparrow$
& \textbf{AUROC} $\uparrow$ & \textbf{PRO} $\uparrow$ \\ \hline
bottle & 99.05 & 82.79 & 97.96 & 86.55 & 97.40 & 70.20 & 99.32 & 81.10 & 98.97 & 85.00 & {99.56} & 83.53 & \textbf{99.89} & \textbf{91.58} \\
cable & 95.29 & 55.04 & 95.95 & 69.30 & 95.07 & 50.76 & 95.57 & 65.15 & 95.36 & 62.79 & 95.74 & 68.66 & \textbf{98.58} & \textbf{84.15} \\
capsule & 98.11 & 44.58 & 99.56 & \textbf{68.17} & 96.50 & 33.59 & 99.26 & 56.40 & 99.19 & 66.21 & 99.16 & 39.59 & \textbf{99.59} & 51.98 \\
carpet & 97.14 & 64.62 & \textbf{99.82} & \textbf{73.79} & 96.42 & 65.94 & 99.36 & 65.13 &96.34 & 65.99 & 98.30 & 63.67 & 99.71 & 73.08 \\
grid & 98.94 & 43.52 & \textbf{99.66} & \textbf{56.60} & 95.64 & 37.91 & 99.20 & 44.49 & 99.60 & 53.17 & 99.33 & 46.25 & 99.54 & 49.05 \\
hazel\_nut & 99.14 & 79.36 & 99.45 & 84.43 & 95.63 & 64.92 & 99.32 & 82.91 & 98.89 & 78.75& {99.76} & 83.48& \textbf{99.79} & \textbf{88.94} \\
leather & 99.85 & 50.32 & 99.90 & 63.34 & 99.78 & 67.41 & 99.88 & 61.91 & \textbf{99.93} & 68.63 & 99.85 & 68.07 & 99.92 & \textbf{70.63} \\
metal\_nut & 99.32 & 69.64 & \textbf{99.54} & 87.03 & 94.80 & 59.76 & 99.50 & 83.67 & 99.46 & 80.54 & 98.80 & 73.52 & 98.90 & \textbf{91.54} \\
pill & 95.38 & 54.40 & 96.70 & 72.77 & 98.29 & 36.60 & \textbf{99.62} & 69.02 & 99.19 & 70.11 & 99.47 & 60.96 & 98.89 & \textbf{76.24} \\
screw & 92.78 & 20.63 & \textbf{99.47} & 49.35 & 94.88 & 23.06 & 99.14 & \textbf{51.86} & 98.99 & 48.42 & 96.31 & 36.11 & 97.63 & 38.76 \\
tile & 98.60 & 82.83 & 99.73 & 87.80 & 99.07 & 83.37 & \textbf{99.74} & \textbf{88.59} & 99.31 & 82.91 & 99.53 & 83.14 & 98.87 & 87.63 \\
toothbrush & 88.21 & 29.07 & 98.53 & 69.12 & 85.42 & 30.15 & 97.95 & 62.25 & 96.37 & 60.77 & 96.23 & 37.67 & \textbf{99.33} & \textbf{74.54} \\
transistor & 96.40 & 55.03 & 92.80 & 68.10 & 94.41 & 49.77 & 97.17 & 70.57 & 95.59 & 68.92 & 98.84 & 72.85 & \textbf{98.88} & \textbf{82.20} \\
wood & 94.01 & 64.46 & 99.24 & \textbf{78.80} & 95.94 & 54.97 & \textbf{99.31} & 72.97 & 98.76 & 72.49 & 97.52 & 66.84 & 97.62 & 78.00 \\
zipper & 99.58 & 74.62 & 99.68 & 78.02 & 99.72 & 77.09 & 99.47 & 73.73 & 99.69 & 77.11 & 99.65 & 74.82 & \textbf{99.86} & \textbf{80.43} \\
\rowcolor{blue!10} Average & 96.79 & 58.06 & 98.53 & 72.88 & 95.93 & 53.70 & 98.92 & 68.65 & 98.38 & 69.45 & 98.54 & 63.94 & \textbf{99.34} & \textbf{74.86} \\
\hline    
\end{tabular}
}
\label{pro}
\end{table*}

\begin{table*}[!t]
\centering
\caption{Comparison of AP and F1 for anomaly synthesis across various methods.}
\resizebox{1\textwidth}{!}{
\begin{tabular}{l|cc|cc|cc|cc|cc|cc|cc}
\hline
\rowcolor{blue!10} \textbf{Category} 
& \multicolumn{2}{c|}{\textbf{CutPaste}} 
& \multicolumn{2}{c|}{\textbf{DRAEM}} 
& \multicolumn{2}{c|}{\textbf{GLASS}} 
& \multicolumn{2}{c|}{\textbf{DFMGAN}} 
& \multicolumn{2}{c|}{\textbf{RealNet}} 
& \multicolumn{2}{c|}{\textbf{AnomalyDiffusion}} 
& \multicolumn{2}{c}{\textbf{STAGE}} \\ \hline
\rowcolor{blue!10} & \textbf{F1} $\uparrow$ & \textbf{AP} $\uparrow$ 
& \textbf{F1} $\uparrow$ & \textbf{AP} $\uparrow$ 
& \textbf{F1} $\uparrow$ & \textbf{AP} $\uparrow$ 
& \textbf{F1} $\uparrow$ & \textbf{AP} $\uparrow$ 
& \textbf{F1} $\uparrow$ & \textbf{AP} $\uparrow$ 
& \textbf{F1} $\uparrow$ & \textbf{AP} $\uparrow$
& \textbf{F1} $\uparrow$ & \textbf{AP} $\uparrow$ \\ \hline
bottle & 78.56 & 92.72 & 83.59 & 94.71 & 65.36 & 90.69 & 78.10 & 92.63 & 82.12 & 94.80 & 81.66 & 93.32 & \textbf{91.11} & \textbf{97.20} \\
cable & 42.08 & 74.09 & 60.05 & 79.94 & 39.08 & 78.01 & 53.97 & 80.27 & 54.02 & 75.98 & 62.75 & 76.15 & \textbf{82.80} & \textbf{91.52} \\
capsule & 33.04 & 64.55 & \textbf{58.21} & \textbf{83.16} & 24.28 & 70.58 & 43.32 & 76.68 & 54.93 & 80.46 & 25.36 & 67.95 & 41.78 & 77.97 \\
carpet & 58.55 & 75.72 & \textbf{73.47} & \textbf{90.52} & 61.51 & 82.11 & 55.83 & 85.07 & 61.01 & 77.90 & 56.94 & 78.20 & 67.24 & 84.55 \\
grid & 29.60 & 56.69 & \textbf{37.27} & \textbf{74.88} & 18.67 & 65.93 & 18.90 & 71.48 & 30.89 & 73.87 & 31.63 & 61.12 & 30.97 & 68.25 \\
hazel\_nut & 73.84 & 91.93 &82.07 & 94.21 & 55.25 & 86.26 & 79.31 & 93.91 & 72.50 & 92.36 & 79.81 & 93.74 &\textbf{87.84} & \textbf{96.76} \\
leather & 36.79 & 84.05 & 51.78 & \textbf{90.80} & 57.11 & 87.92 & 50.40 & 85.78 & 56.96 & 90.03 & 55.11 & 85.66 & \textbf{61.77} & 88.52 \\
metal\_nut & 57.91 & 88.55 & 85.31 & 95.21 & 53.24 & 81.01 & 78.46 & 95.86 & 75.39 & 91.88 & 67.02 & 84.73 & \textbf{91.22} & \textbf{97.26} \\
pill & 42.99 & 69.69 & 67.05 & 86.85 & 22.91 & 77.72 & 60.39 & {87.41} & 60.54 & 85.21 & 52.58 & 77.89 & \textbf{74.52} & \textbf{88.08} \\
screw & 15.66 & 43.08 & 41.62 & 68.24 & 18.21 & 61.34 & \textbf{43.27} & \textbf{73.58} & 40.37 & 69.08 & 28.76 & 62.03 & 31.02 & 63.08 \\
tile & 77.73 & 92.94 & 84.46 & \textbf{96.78} & 78.09 & 94.37 & \textbf{85.58} & 96.69 & 76.37 & 93.55 & 76.24 & 93.43 & 84.40 & 95.99 \\
toothbrush & 23.94 & 36.97 & 60.48 & 80.52 & 20.95 & 43.77 & 47.94 & 77.66 & 49.35 & 67.16 & 33.76 & 53.65 & \textbf{71.25} & \textbf{82.67} \\
transistor & 42.97 & 73.01 & 59.39 & 77.54 & 18.29 & 73.25 & 55.97 & 82.68 & 61.32 & 78.10 & 68.60 & 82.04 & \textbf{81.93} & \textbf{93.80} \\
wood & 59.02 & 73.64 & 73.16 & \textbf{91.12} & 46.06 & 77.78 & 64.42 & 89.61 & 65.24 & 87.16 & 61.79 & 77.53 & \textbf{75.66} & 85.84 \\
zipper & 66.88 & 90.10 & 72.17 & 92.48 & 70.51 & 92.95 & 66.40 & 89.54 & 71.53 & 90.93 & 68.55 & 90.96 & \textbf{79.02} & \textbf{94.88} \\
\rowcolor{blue!10} Average & 49.30 & 73.85 & 66.01 & 86.48 & 43.30 & 77.58 & 58.84 & 85.24 & 60.84 & 83.23 & 56.70 & 78.55 & \textbf{70.07} & \textbf{87.02} \\
\hline    
\end{tabular}
}
\label{tab:ap_f1}
\end{table*}

\subsection{Main Results}

\subsubsection{Comparisons on Anomaly Segmentation}
The quantitative results on anomaly segmentation indicate that STAGE consistently outperforms other SIAS methods. To demonstrate this, we synthesize 500 image–mask pairs for each anomaly type across various categories in both the MVTec~AD and BTAD datasets. We then merge these synthetic pairs with about one-third of the real image–mask pairs for training, and use the remaining real pairs for evaluation. As shown in Table~\ref{Segformer_mvtec}(MVTec~AD), Table~\ref{mvtec_bise}(MVTec~AD), Table~\ref{mvtec_stdc}(MVTec~AD) and Table~\ref{seg_btad}(BTAD), these models trained with STAGE’s synthetic data exhibit marked improvements in almost all categories on three segmentation backbones. 
\begin{figure}[H]
  \centering
  \includegraphics[width=\linewidth,trim=0.6mm 0.4mm 0.6mm 0mm,clip]{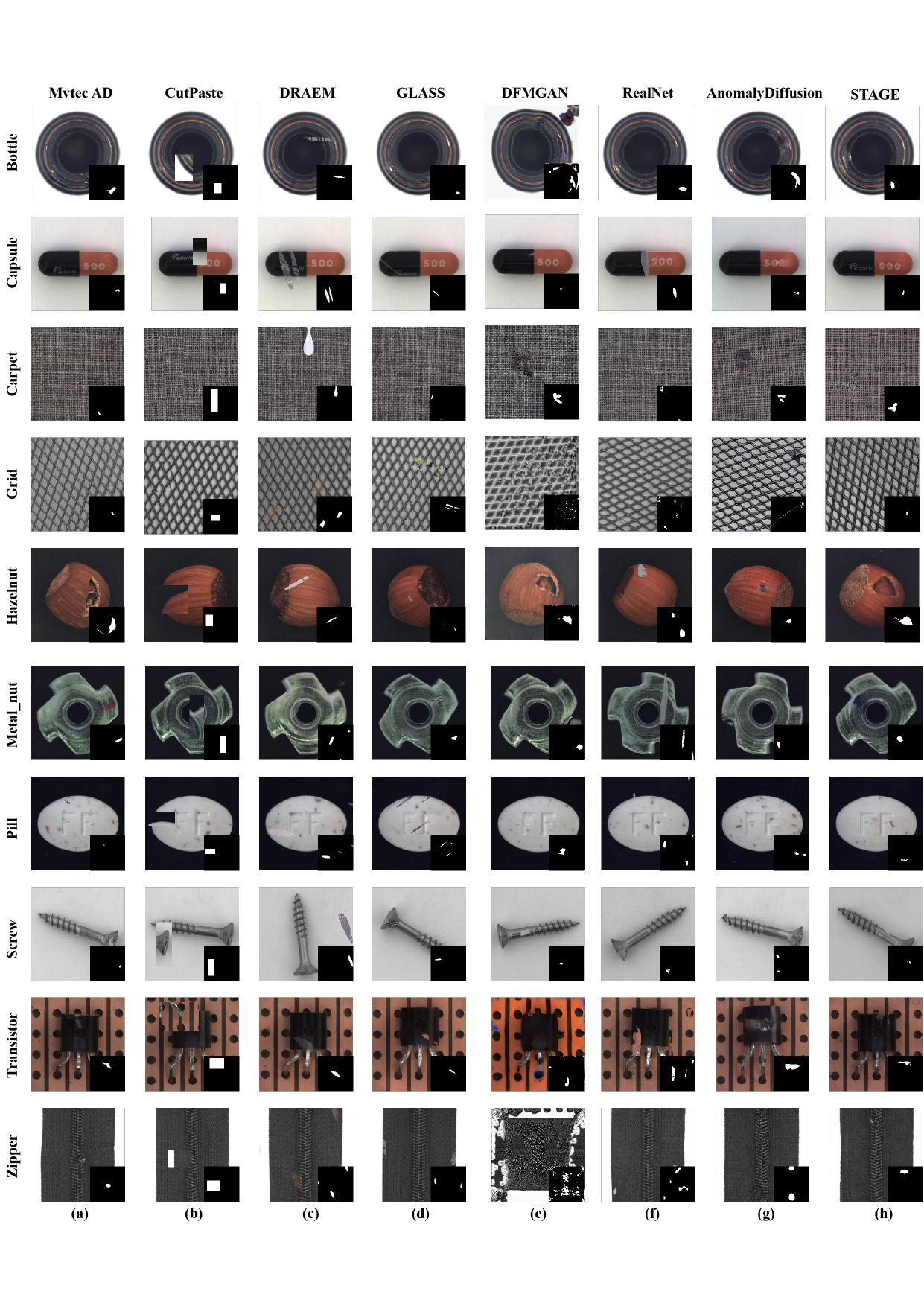} 
  % \vspace{-6pt}
  \caption{Comparison of anomaly synthesis methods on \textbf{MVTec AD}. Each column represents a method (Ground Truth (MVTec AD), CutPaste, DRAEM, GLASS, DFMGAN, RealNet, AnomalyDiffusion, and our \textbf{STAGE}). Each row shows an object with its mask annotations.}
  \label{fig:MV}
\end{figure}

For capsules with tiny abnormal regions, STAGE improves the segmentation mIoU by 8.57\%, 13.04\%, and 10.34\% over the second-best methods when using SegFormer, BiseNet V2, and STDC, respectively. On average, STAGE improves the mIoU by 7.74\% over the second-best method DFMGAN when using the SegFormer backbone. These results underscore the effectiveness of STAGE in enhancing SIAS, especially for cases with tiny anomalies.

\begin{figure*}[htb]
    \centering
\includegraphics[width=1\linewidth]{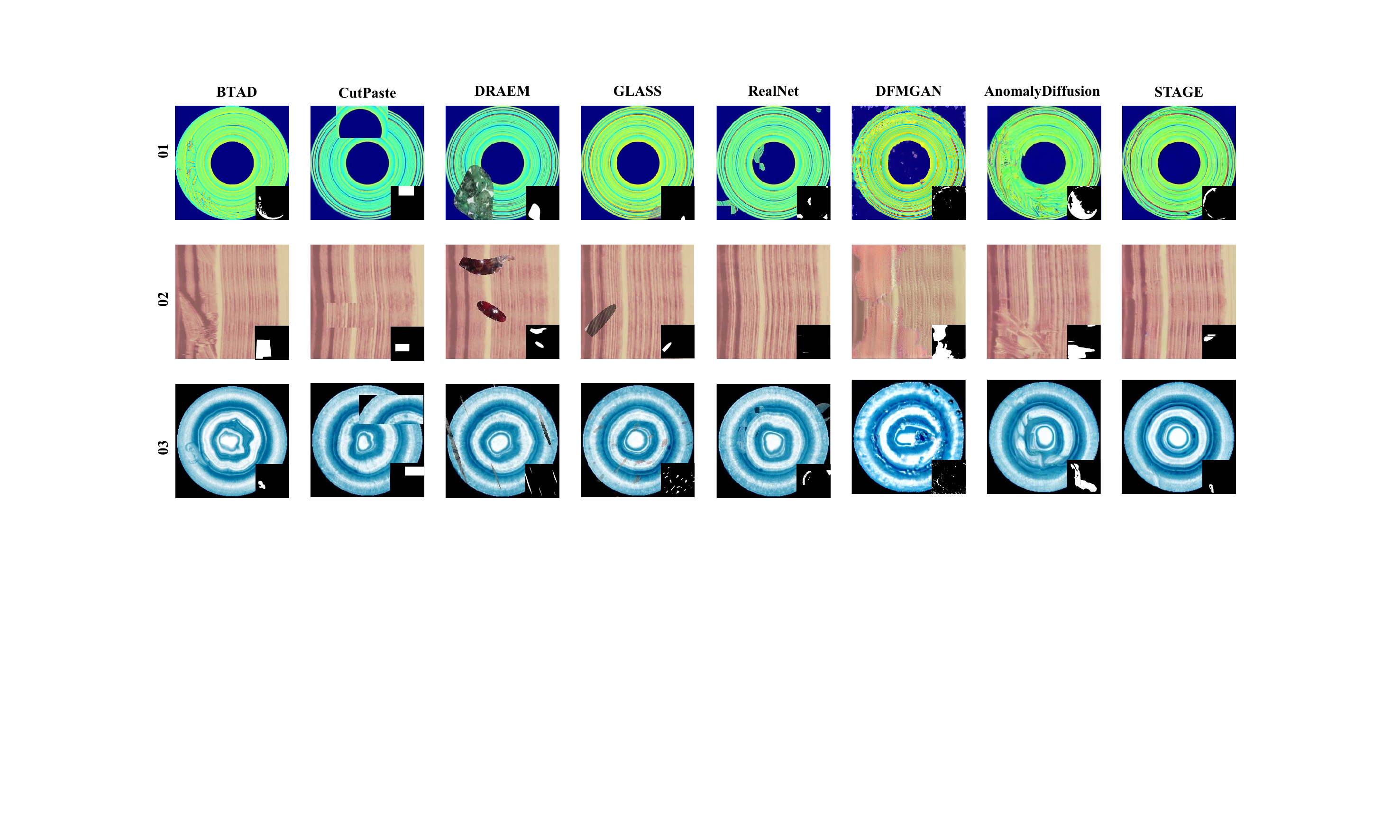}
% \vspace{-6pt}
    \caption{Comparison of different anomaly synthesis methods. Each column represents a different method, including Ground Truth (BTAD), CutPaste, DRAEM, RealNet, DFMGAN, AnomalyDiffusion, and our \textbf{STAGE}. Each row corresponds to a different object with its mask annotations.}
    \label{BTAD}
    % \vspace{-6pt}
\end{figure*}
\subsubsection{Comparisons on Anomaly Detection for Auxiliary Analysis}

To provide a more comprehensive evaluation of the performance of different methods on SIAS, we additionally report their results on conventional metrics. Specifically, anomaly maps are extracted from the SegFormer logits by selecting the channel corresponding to anomalies, and standard metrics are computed based on these maps. This strategy avoids coupling the evaluation with model-specific downstream architectures or training pipelines, thereby preventing biases unrelated to intrinsic synthesis quality. The adoption of a unified backbone ensures that the evaluation reflects only the quality of the synthesized anomalies rather than differences in downstream detection models. As shown in Table~\ref{pro}, STAGE achieves advanced performance in terms of AUROC and PRO across most categories. Moreover, Table~\ref{tab:ap_f1} further demonstrates that STAGE delivers competitive performance in terms of AP and F1-score, underscoring the robustness of its anomaly synthesis quality.

\subsubsection{Comparisons on Anomaly Synthesis}
Furthermore, Fig.~\ref{fig:MV} and Fig.~\ref{BTAD} illustrate the high-quality anomalies generated by STAGE on the MVTec and BTAD datasets. Specifically, these results show that STAGE generates realistic anomalies that precisely align with the background. In contrast, other existing models exhibit various limitations: (\romannumeral1) CutPaste, DRAEM, and GLASS rely on data augmentation or patch-based sampling from other datasets, resulting in anomalies with unnatural textures and poor integration with the background. (\romannumeral2) RealNet perturbs the variance of the denoising distribution to synthesize anomalies, but these anomalies remain scattered. As shown in the transistor in Fig.~\ref{fig:MV}, although some structural consistency is preserved, the anomalies fail to seamlessly blend with the background. (\romannumeral3) DFMGAN and AnomalyDiffusion aim to model the underlying anomaly distribution but fail to properly match anomalies to the mask annotations. For instance, DFMGAN produces a fractured background for the zipper class, and abnormal areas produced by AnomalyDiffusion are smaller than the assigned mask for the pill class. In contrast, STAGE consistently generates anomalies that are well-aligned with the mask annotations and exhibit realistic texture details, thus enabling more accurate and reliable anomaly synthesis for downstream segmentation.

\begin{figure*}[hbt]
 % \vspace{-6pt}
  \centering
  \includegraphics[width=\linewidth]{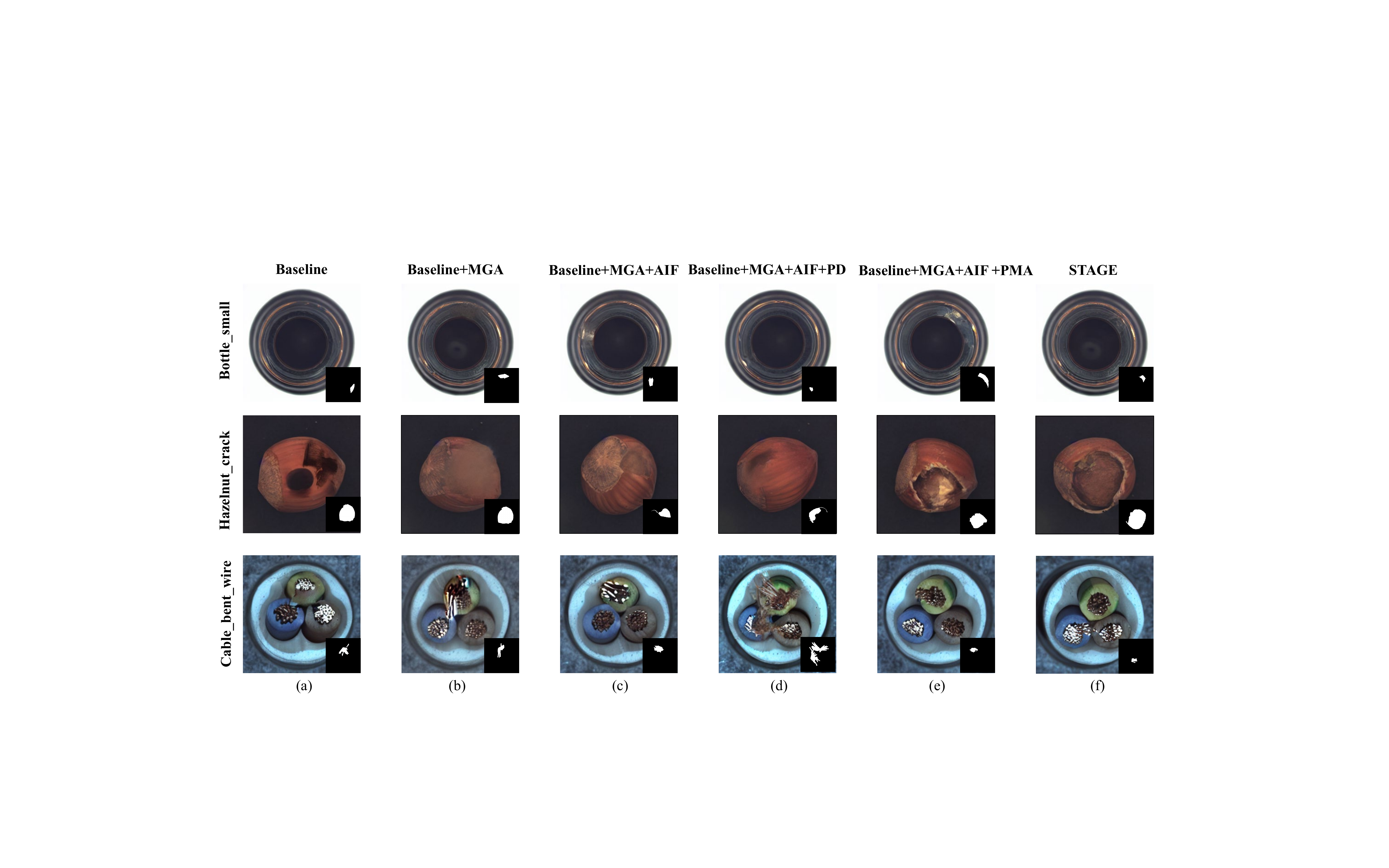}
  \caption{
    Qualitative comparison of ablation studies. Each row corresponds to the different object and anomaly type, while each column represents results from the model with different modules.
  }
  \label{net5}
\end{figure*}

\subsection{Ablation Study on STAGE}
\begin{figure*}[!t]
  \centering
  \includegraphics[width=1\linewidth]{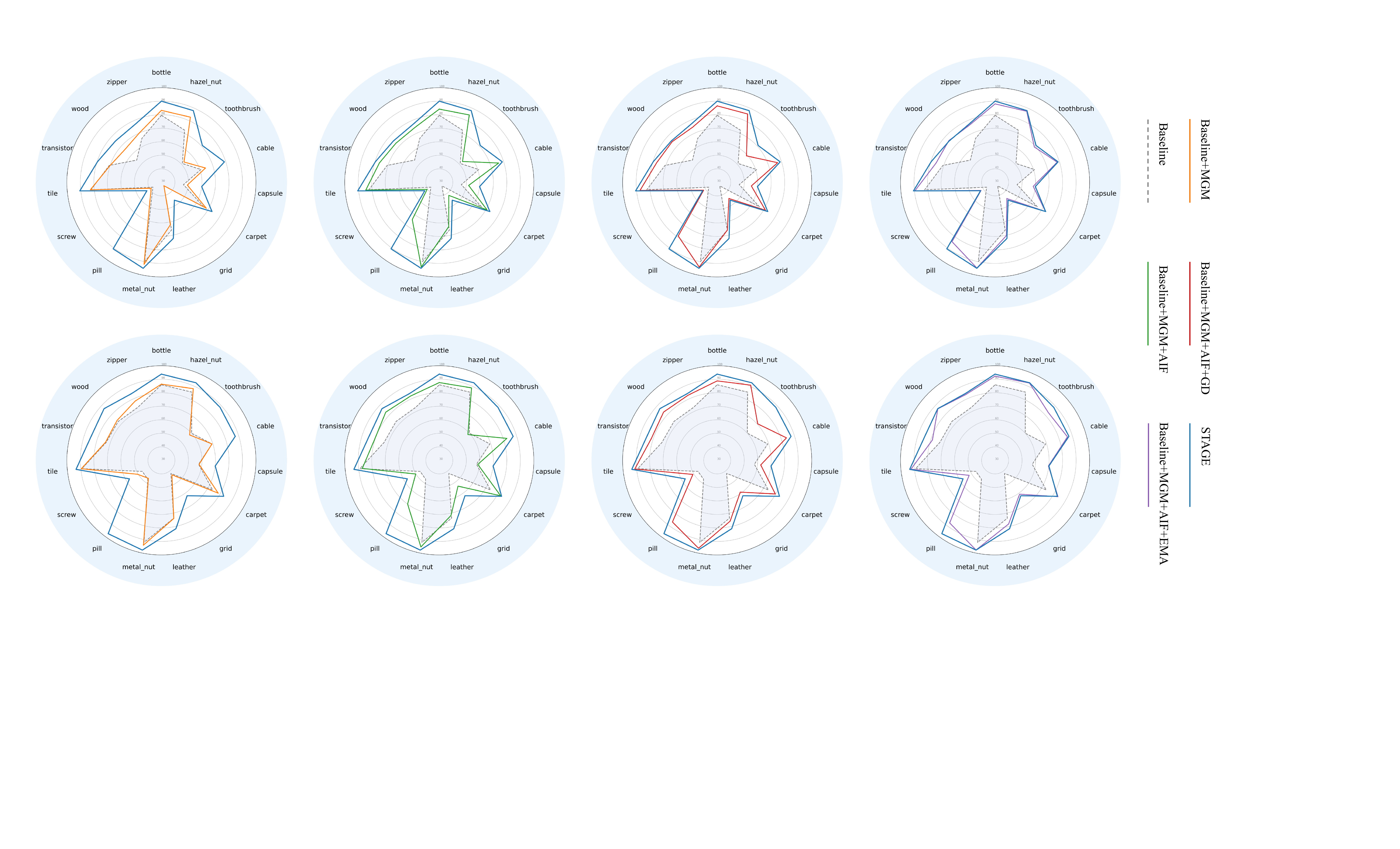}
  \caption{The effect of different modules of STAGE in the MVTec dataset. \textbf{Top row} illustrate per-category segmentation results (mIoU), \textbf{bottom row} present classification accuracy (Acc).}
  \label{radar:ablation}
  % \vspace{-6pt}
\end{figure*}

We conduct ablation studies on all categories in the MVTec dataset to evaluate Anomaly Inference (AIF), Graded Diffusion (GD), Explicit Mask Alignment (EMA), and Mask Guidance Adapter (MGA). Six configurations are employed: (1) baseline, (2) baseline + MGA, (3) baseline + MGA + AIF, (4) baseline + MGA + AIF + GD, (5) baseline + MGA + AIF + EMA, and (6) baseline + MGA + AIF + GD + EMA (i.e., the proposed STAGE). Here, the baseline refers to the diffusion model without any additional modules. For each setup, we generate 500 abnormal image–mask pairs per anomaly type to train a SegFormer for anomaly segmentation. Fig.~\ref{radar:ablation} and Fig.~\ref{net5} present the quantitative and qualitative results, respectively.

\textbf{Effectiveness of MGA.}  
Fig.~\ref{radar:ablation} illustrates that equipping the model with MGA leads to consistent improvements across nearly all MVTec categories. Specifically, the average mIoU and accuracy increase from 58.19\% and 67.30\% (Baseline) to 65.45\% and 73.27\%. Notable per-category gains are observed on challenging classes such as {hazel\_nut} (+10.56\% mIoU) and {wood} (+11.74\% mIoU). These results confirm that MGA effectively adapts the pre-trained LDM to the anomaly domain, providing targeted guidance without updating the backbone, serving as a lightweight and efficient adaptation module.

\textbf{Effectiveness of AIF.}  
Results demonstrate that AIF outperforms conventional inpainting-based synthesis. This improvement comes from explicitly integrating clean background information into the denoising distribution, which suppresses redundant background reconstruction and directs the model to generate anomalies strictly within masked regions. As shown in Fig.~\ref{net5}(b), the synthesized anomalies in the {hazel\_nut} exhibit clearer boundaries and stronger contrast, thereby providing more reliable guidance for downstream segmentation.

\textbf{Effectiveness of GD.}  
GD prevents small or low-contrast anomalies from being overlooked. As shown in Fig.~\ref{net5}, GD enables accurate generation of anomalies corresponding to small anomaly masks (e.g., the {bottle} in Fig.~\ref{net5}(c)), thereby improving the quality of the synthesized data. Compared to the model with AIF alone, GD yields performance improvements of 2.4\%, 0.8\%, and 6.16\% across the three segmentation models, as shown in Fig.~\ref{radar:ablation}.

\textbf{Effectiveness of EMA.}  
As shown in Fig.~\ref{net5}, integrating EMA into AIF results in smoother transitions between anomalies and background. Unlike binary mask fusion in “AIF+PD”, which introduces boundary artifacts, EMA progressively aligns the mask over time, enabling coherent anomaly–background integration. This refinement improves segmentation: compared to “AIF+GD”, EMA achieves mIoU gains of 1.45\%, 2.17\%, and 9.85\% across different models. Compared to “AIF” alone, the improvements are even larger: 3.85\%, 2.97\%, and 16.01\%. Notably, in the \textit{toothbrush}, which contains highly complex and fragmented masks, the spatial alignment introduced by EMA yields the most substantial improvement.

Overall, omitting AIF, GD, or EMA degrades STAGE’s performance, with the steepest declines when multiple components are removed. Integrating all three achieves the highest mIoU and accuracy, confirming their complementary roles and the overall effectiveness of STAGE.
% \vspace{-15pt}

\subsection{Limitations}

However, it can be observed that the segmentation performance for the \textit{screw} and \textit{grid} categories consistently falls short of the best results across different segmentation backbones. Upon closer inspection, we attribute this limitation to the inherent mismatch between anomaly masks and the objects themselves. Unlike categories such as \textit{zipper} and \textit{wood}, where the objects occupy a substantial portion of the image, the \textit{screw} and \textit{grid} objects cover only a small area. During the synthesis process, the anomaly masks cannot be precisely aligned with the object regions and are often misaligned and projected onto background regions. This misalignment introduces erroneous contextual information to EMA, ultimately leading to suboptimal results. Therefore, a promising direction for future research is to design region-aware mask synthesis strategies, particularly tailored for small objects.

\section{Conclusion}\label{sec:conclusion}

% In this work, we introduce STAGE (Segmentation-oriented Anomaly Synthesis via Graded Diffusion with Explicit Mask Alignment), a novel industrial anomaly synthesis framework tailored for industrial manufacturing scenarios. STAGE addresses key limitations of existing methods, such as unrealistic textures, misalignment between anomalies and backgrounds, and the omission of tiny anomalies. STAGE is composed of three complementary components: (1) \textit{Anomaly inference} that incorporates clean background information into the denoising process to suppress redundant background reconstruction and emphasize anomaly generation; (2) The  \textit{graded diffusion} strategy with a dual-branch design that preserves small anomaly regions throughout the sampling trajectory; and (3) The \textit{explicit mask alignment} that progressively adjusts spatial blending between anomaly and background regions to ensure smooth boundary transitions. Extensive experiments on the MVTec and BTAD datasets show that STAGE significantly outperforms state-of-the-art anomaly synthesis methods in segmentation performance. These results underscore STAGE’s potential to advance high-precision industrial inspection through effective and controllable anomaly synthesis.
In this work, we introduce STAGE (Segmentation-oriented Anomaly Synthesis via Graded Diffusion with Explicit Mask Alignment), a novel framework for industrial anomaly synthesis. STAGE addresses key limitations of existing methods—unrealistic textures, misalignment between anomalies and the background, and omission of tiny anomalies. It comprises three complementary components: (1) \textit{anomaly inference} that conditions denoising on clean background into denoising to suppress redundant reconstruction and emphasize anomaly generation; (2) a \textit{graded diffusion} strategy with a dual-branch design that preserves small anomaly regions throughout sampling; and (3) \textit{explicit mask alignment} that progressively adjusts spatial blending between anomaly and background to ensure smooth boundaries. Extensive experiments on MVTec and BTAD show that STAGE significantly outperforms state-of-the-art anomaly synthesis methods in segmentation. These results underscore STAGE’s potential to advance high-precision industrial inspection through effective, controllable anomaly synthesis.
\bibliographystyle{IEEEtran}
\bibliography{ref}

\end{document}